\def\noindentation{\let\@afterindentfalse}
\newcommand{\mytitle}[1]{
    \vskip 2em
    {\bf\sffamily\LARGE #1}
    \noindentation}
\renewcommand{\@maketitle}{
    \begin{flushleft}{

        \@author \par 

        \@date \par 

        \mytitle{\@title}

    } \end{flushleft}
} 
\definecolor{blued}{RGB}{70,197,221}
\newcommand{\todod}[1]{\todo[color=blued,inline]{#1}}
\newcommand{\todoaout}[1]{\todo[color=yellow]{\tiny#1}}
\definecolor{citrine}{rgb}{0.89, 0.82, 0.04}
\newcommand{\todom}[1]{\todo[color=citrine]{\tiny#1}}
\newcommand{\todomi}[1]{\todo[color=citrine, inline]{#1}}
\renewcommand{\cite}[1]{\citep{#1}}
\renewcommand{\epsilon}{\varepsilon}
\newcommand{\ie}{i.e.\xspace}
\newcommand{\eg}{e.g.\xspace}
\newcommand{\whp}{w.h.p.\xspace}
\newcommand{\abw}{\wt{\bw}}
\newcommand{\abA}{\wt{\bA}}
\newcommand{\phimat}[1]{\boldsymbol{\Phi}(\bX_{#1})}
\newcommand{\aphimat}[1]{\wt{\boldsymbol{\Phi}}_{#1}(\bX_{#1})}
\newcommand{\phivec}[1]{\featmap(\bx_{#1})}
\newcommand{\nystrom}{Nystr\"{o}m\xspace}
\newcommand{\coldict}{\mathcal{S}}
\newcommand{\pvec}{\mathbf{\psi}}
\newcommand{\pmat}{\mathbf{\Psi}}
\newcommand{\armset}{\mathcal{A}}
\newcommand{\featmap}{\boldsymbol{\phi}}
\newcommand{\embfunc}{\bz}
\newcommand{\embvec}{\bz}
\newcommand{\embmat}{\bZ}
\newcommand{\fnorm}{F}
\newcommand{\Narm}{A}
\newcommand{\history}{\mathcal{D}}
\newcommand{\qbar}{\wb{q}}
\newcommand{\bkb}{\textsc{BKB}\xspace}
\newcommand{\oful}{\textsc{OFUL}\xspace}
\newcommand{\soful}{\textsc{SOFUL}\xspace}
\newcommand{\gpucb}{\textsc{GP-UCB}\xspace}
\newcommand{\supkernelucb}{\textsc{SupKernelUCB}\xspace}
\def\:#1{\protect \ifmmode {\mathbf{#1}} \else {\textbf{#1}} \fi}
\newcommand{\wt}[1]{\widetilde{#1}}
\newcommand{\wh}[1]{\widehat{#1}}
\newcommand{\wb}[1]{\overline{#1}}
\newcommand{\bsym}[1]{\mathbf{#1}}
\newcommand{\ba}{\mathbf{a}}
\newcommand{\bk}{\mathbf{k}}
\newcommand{\bw}{\mathbf{w}}
\newcommand{\bx}{\mathbf{x}}
\newcommand{\by}{\mathbf{y}}
\newcommand{\bz}{\mathbf{z}}
\newcommand{\bA}{\mathbf{A}}
\newcommand{\bC}{\mathbf{C}}
\newcommand{\bG}{\mathbf{G}}
\newcommand{\bI}{\mathbf{I}}
\newcommand{\bK}{\mathbf{K}}
\newcommand{\bP}{\mathbf{P}}
\newcommand{\bS}{\mathbf{S}}
\newcommand{\bV}{\mathbf{V}}
\newcommand{\bX}{\mathbf{X}}
\newcommand{\bZ}{\mathbf{Z}}
\DeclareMathOperator*{\argmin}{arg\,min}
\DeclareMathOperator*{\argmax}{arg\,max}
\DeclareMathOperator*{\logdet}{log\,det}
\renewcommand{\epsilon}{\varepsilon}
\newcommand{\bigotime}{\mathcal{O}}
\newcommand{\abigotime}{\wt{\mathcal{O}}}
\newcommand{\normsmall}[1]{\Vert #1 \Vert}
\newcommand{\normempty}[1]{\left\Vert #1 \right\Vert}
\newcommand{\transp}{\mathsf{\scriptscriptstyle T}}
\newcommand{\comp}{\complement}
\DeclareMathOperator*{\Tr}{Tr}
\DeclareMathOperator*{\Det}{Det}
\DeclareMathOperator*{\Ran}{Im}
\newcommand{\probability}{\mathbb{P}}
\DeclareMathOperator*{\expectedvalue}{\mathbb{E}}
\newcommand{\condbar}{\;\middle|\;}
\newcommand{\gaussdistr}{\mathcal{N}}
\newcommand{\bernoullidist}{{\rm Bernoulli}}
\newcommand{\filtration}{\mathcal{F}}
\newcommand{\indfunc}{\mathbb{I}}
\newcommand{\Real}{\mathbb{R}}
\newcommand{\rkhs}{\mathcal{H}}
\newcommand{\kerfunc}{k}
\newcommand{\kermatrix}{{\bK}}
\newcommand{\deff}{d_{\text{\rm eff}}}
\newcommand{\selmatrix}{{\bS}}
\let\originalleft\left
\let\originalright\right
\renewcommand{\left}{\mathopen{}\mathclose\bgroup\originalleft}
\renewcommand{\right}{\aftergroup\egroup\originalright}
\title[BKB]{
Gaussian Process Optimization with Adaptive Sketching:\\*
Scalable and No Regret 
}
\begin{document}

\maketitle

\begin{abstract}
Gaussian processes (GP) are a well studied Bayesian approach for the optimization of black-box functions. Despite their effectiveness in simple problems, GP-based algorithms hardly scale to high-dimensional functions, as their per-iteration time and space cost is at least \emph{quadratic} in the number of dimensions $d$ and iterations~$t$. Given a set of $\Narm$ alternatives to choose from, the overall runtime $\bigotime(t^3\Narm)$ is prohibitive. 
In this paper we introduce \bkb (\textit{budgeted kernelized bandit}), a new approximate GP algorithm for optimization under bandit feedback that achieves near-optimal regret (and hence near-optimal convergence rate) with near-constant per-iteration complexity and remarkably no assumption on the input space or covariance of the GP.

We combine a kernelized linear bandit algorithm (GP-UCB) with randomized matrix sketching based on leverage score sampling, and we prove that randomly sampling inducing points based on their posterior variance gives an accurate low-rank approximation of the GP, preserving variance estimates and confidence intervals. As a consequence, \bkb does not suffer from \emph{variance starvation}, an important problem faced by many previous sparse GP approximations.
Moreover, we show that our procedure
	selects at most $\wt{\bigotime}(\deff)$ points, where $\deff$
	is the \emph{effective} dimension of the explored space, which is typically much
	smaller than both $d$ and $t$. 
	This greatly reduces the dimensionality of the
	problem, thus leading to a $\bigotime(T\Narm\deff^2)$ runtime and $\bigotime(
    \Narm \deff)$ space complexity.
\end{abstract}

\begin{keywords}
    \emph{sparse Gaussian process optimization};
    \emph{kernelized linear bandits};
    \emph{regret};
    \emph{sketching};
    \emph{Bayesian optimization};
    \emph{black-box optimization};
    \emph{variance starvation}
\end{keywords}

\vspace{-.5\baselineskip}
\section{Introduction}
Efficiently selecting the best alternative out of a set of alternatives is  important
 in sequential decision making, 
with practical applications ranging from recommender systems \citep{li2010contextual}
to experimental design~\citep{robbins1952some}.
It is also the main focus of the research in bandits~\citep{lattimore2019bandit} and Bayesian
optimization~\citep{mockus1989global,pelikan2005hierarchical,snoek2012practical},
that study optimization under bandit feedback. 
In this setting, a learning algorithm sequentially interacts with
a reward or utility function $f$. Over $T$ interactions, the algorithm chooses a point $\bx_t$ and it has only access to a noisy black-box evaluation of $f$ at $\bx_t$. 
The goal of the algorithm is to minimize the cumulative regret, which compares the reward accumulated at the points selected over time, $\sum_t f(\bx_t),$ to the reward obtained by repeatedly selecting the optimum of the function, \ie, $T\max_x f(x).$
In this paper take the Gaussian process optimization approach.
In particular, we study the \gpucb algorithm first introduced by~\citet{srinivas2010gaussian}.

Starting from a Gaussian process prior over $f,$ \gpucb alternates between evaluating the function,
and using the evaluations to build a posterior of $f$.
This posterior is composed by a mean function $\mu$ that estimates
the value of $f$, and a variance function $\sigma$ that captures the uncertainty~$\mu$. 
These two quantities are combined in a single upper confidence bound (UCB)
that drives the selection of the evaluation points, and trades off between
evaluating high-reward points (\emph{exploitation}) and testing possibly sub-optimal points to reduce the uncertainty on the function (\emph{exploration}).
While other approaches to select promising points exist, such as expected improvement (EI)
and maximum probability of improvement, it is not known if they can achieve
low regret.
The performance of \gpucb has been studied by~ \citet{srinivas2010gaussian,valko2013finite,chowdhury2017kernelized}  to show that \gpucb provably achieves low regret both in a Bayesian and
non-Bayesian setting. However, the main limiting factor to its applicability is its computational cost.
When choosing between $\Narm$ alternatives, \gpucb requires $\Omega(\Narm t^2)$ time/space to select each new point,
and does not scale to long and complex optimization problems.
Several approximations of \gpucb have been suggested \citep{quinonero-candela2007approximation,liu2018gaussian}
and we review them next:\\[0.025in]
\textbf{Inducing points:} The GP can be restricted to lie in the range
of a small subset of inducing points. The subset should cover the space well
for accuracy, but also be as small as possible for efficiency.
Methods referred to as \textit{sparse GPs}, have been proposed to select the inducing points and an approximation based on the subset.
Popular instances of this approach are the subset of regressors (SoR, \citealp{wahba1990spline}) and the deterministic training conditional (DTC, \citealp{seeger2003fast}).
While these methods are simple to interpret and efficient, they do not come
with regret guarantees. Moreover, when the subset does not cover
the space well, they suffer from \emph{variance starvation} \citep{wang2018batched}, as they underestimate the variance
of points far away from the inducing points.\\[0.025in]
\textbf{Random Fourier features:}
Another approach is to use explicit feature expansions to approximate the GP
covariance function, and embed the points in a low-dimensional space,
usually exploiting some variation of Fourier expansions~\citep{rahimi2007random}.
Among these methods, \citet{mutny2018efficient} recently showed that
discretizing the posterior on a fine grid of quadrature Fourier features (QFF) incurs
a negligible approximation error. This is sufficient
to prove that the maximum of the approximate posterior can be efficiently found
and that it is accurate enough to guarantee that Thompson sampling with QFF provably achieves low regret.
However this approach does not extend to non-stationary (or non-translation invariant) kernels and although its dependence on $t$ is small,
the approximation and posterior maximization procedure scales exponentially with the input dimension.\\[0.025in]
\textbf{Variational inference:}  This approach replaces the true GP
likelihood with a variational approximation that can be optimized efficiently.
Although recent methods provide guarantees on the approximate posterior mean
and variance \citep{huggins2019scalable}, these guarantees only apply to GP regression and not to the harder optimization setting.\\[0.025in]
\textbf{Linear case:} There are multiple methods that reduce the complexity of linear bandits algorithms, most focused on approximating LinUCB \cite{li2010contextual}. \citet{kuzborskij2019efficient} uses the \emph{frequent directions} (FD, \citealp{ghashami2016frequent}) 
to project the design matrix data to a smaller subspace. Unfortunately, the size of the subspace has to be specified in advance, and when the size is not sufficiently large the method suffers linear regret.
Prior to the FD approach, \textsc{CBRAP} \citep{yu2017contextual} used random projections instead, but faced similar issues.
This turns out to be a fundamental weakness of all approaches that do not adapt the \textit{actual} size of the space defined by the sequence of points selected by the learning algorithm.
Indeed, \citet{ghosh2017misspecified} showed a lower bound
that shows that
as soon as one single arm does not abide by the projected linear model
 we can suffer linear regret.

\vspace{-.5\baselineskip}
\subsection{Contributions}
In this paper, we show a way to adapt the size of the projected
space online and devise the \bkb (budgeted kernel bandit) algorithm achieving near-optimal regret with a computational complexity drastically smaller than \gpucb. 
This is achieved \emph{without assumptions on the complexity}
of the input or on the kernel function.
\bkb leverages several well-known tools: a DTC approximation
of the posterior variance, based on inducing points, and a confidence interval construction
based on state-of-the-art self-normalized concentration inequalities~\citep{abbasi2011improved}.
It also introduces two novel tools: a selection strategy to select
inducing points based on ridge leverage score (RLS) sampling \cite{alaoui2014fast} that is provably accurate,
and an approximate confidence interval  that is not only
nearly as accurate as the one of \gpucb, but also efficient.

Ridge leverage score sampling was introduced for randomized kernel matrix approximation
\citep{alaoui2014fast}. In the context of GPs, RLS correspond to the posterior variance
of a point, which allows adapting algorithms and their guarantees 
from the RLS sampling to the GP setting. This solves two 
problems in sparse GPs and linear bandit approximations.
First, \bkb constructs estimates of the variance that are provably accurate,
\ie, it does not suffer from variance starvation, which results in provably
accurate confidence bounds as well.
The only method with comparable guarantees, Thompson sampling with quadrature FF
\citep{mutny2018efficient}, only applies to stationary  kernels, and is applicable only when the input is low dimensional
or the covariance $\kerfunc$ has an additive structure.
Moreover our approximation guarantees are qualitatively different since they do not require a corresponding \emph{uniform} approximation bound on the GP.
Second, \bkb adaptively chooses the size of the inducing point set based
on the \emph{effective dimension} $\deff$ of the problem, also known as degrees
of freedom of the model. This is crucial to achieve low regret, since
fixed approximation schemes may suffer linear regret.
Moreover, in a problem with $A$ arms, using a set of $\bigotime(\deff)$ inducing points results in an algorithm with $\bigotime(\Narm\deff^2)$ per-step runtime and $\bigotime(\Narm\deff)$ space, a significant improvement
over the $\bigotime(\Narm t^2)$ time and $\bigotime(\Narm t)$ space cost of \gpucb.

Finally, while in our work we only address kernelized (GP) bandits, 
our work could be extended to more complex online learning problems,
such as to recent advances in kernelized reinforcement learning
\citep{chowdhury2019online}.
Moreover, inducing point methods have clear interpretability and our analysis
provides insight both from a bandit and Bayesian optimization perspective,
making it applicable to a large amount of downstream tasks.

 \vspace{-.5\baselineskip}
\section{Background}\label{sec:background}

\textbf{Notation.} We use
lower-case letters $a$ for scalars,
lower-case bold letters $\ba$ for vectors,
and upper-case bold letters $\bA$ for matrices and operators,
where $[\bA]_{ij}$ denotes its element $(i,j)$.
We denote by $\normsmall{\bx}_{\bA}^2 \triangleq \bx^\transp\bA\bx,$ the norm with metric $\bA$,
and $\normsmall{\bx} \triangleq \normsmall{\bx}_{\bI}$ with $\bI$ being the identity. Finally, we denote the first $T$ integers as $[T] \triangleq \{1,\ldots,T\}$.

\paragraph{Online optimization under bandit feedback.}
Let $f: \armset \rightarrow \Real$ be a reward function that we wish to optimize
over a set of decisions $\armset$, also called actions or arms. 
For simplicity, we assume that $\armset \triangleq \{\bx_i\}_{i=1}^{\Narm}$ is a fixed finite set of $\Narm$
vectors in $\Real^d$. We discuss how to relax these assumptions in \Cref{sec:discussion}.
\todoaout{We overload $x_t$ and $x_i$...}
In optimization under bandit feedback, a learner aims to optimize $f$
through a sequence of interactions.
At each step $t \in [T],$ the learner \textbf{(1)} chooses an arm $\bx_t \in \armset$, \textbf{(2)} receives reward $y_t \triangleq f(\bx_t) + \eta_t$, where $\eta_t$ is a zero-mean noise, and \textbf{(3)} updates its model of the problem.

The goal of the learner is to minimize its cumulative
regret 
$R_T \triangleq \sum_{t=1}^T f(\bx_\star) - f(\bx_t)$
w.r.t.\,the best\footnote{We assume a consistent and arbitrary tie-breaking strategy.} $\bx_\star$,
where $\bx_\star \triangleq \argmax_{\bx_i \in \armset} f(\bx_i)$. In particular, the objective of a \textit{no-regret} algorithm is to have $R_T/T$ go to zero as fast as possible when $T$ grows. Recall that the regret is  related to the convergence rate and the optimization performance. In fact, let $\wb \bx_T$ be an arm chosen at random from the sequence of arms $(\bx_1, \ldots, \bx_T)$ selected by the learner, then $f(\bx_\star) - \mathbb{E}[f(\bx_T)] \leq R_T/T$.\todoaout{add ref.}

\subsection*{Gaussian process optimization and \gpucb}\label{ssec:gpucb}
\gpucb is popular no-regret algorithm for optimization under bandit feedback and was
introduced by \citet{srinivas2010gaussian} for Gaussian process
optimization. We first give the formal definition of a Gaussian process  \citep{rasmussen2006gaussian},
and then briefly present \gpucb.

A \emph{Gaussian process} ${\rm {\rm GP}}(\mu, \kerfunc)$  is a generalization of the Gaussian
distribution to a space of functions and it is defined by a mean function $\mu(\cdot)$
and covariance function $\kerfunc(\cdot, \cdot).$ We consider zero-mean ${\rm GP}(0, \kerfunc)$ priors and bounded covariance $\kerfunc(\bx_i, \bx_i) \leq \kappa^2$ for all $\bx_i \in \armset$.
An important property of Gaussian processes is that if we combine a prior
$f \sim {\rm GP}(0, \kerfunc)$ and assume that the observation noise is zero-mean Gaussian (i.e., $\eta_t \sim \gaussdistr(0,\xi^2)$),
then the posterior distribution of $f$
conditioned on a set of observations $\{(\bx_s, y_s)\}_{s=1}^t$ is also a {\rm GP}. More precisely, if $\bX_t  \triangleq [\bx_1, \dots, \bx_t]^\transp \in \Real^{t \times d}$ is the matrix with all arms selected so far and $\by_t \triangleq [y_1, \dots, y_t]^\transp$  the corresponding observations, then the posterior is still a ${\rm GP}$ and the mean and variance of the function at a test point $\bx$ are defined as
\begin{align}
&\mu_t\left(\bx \condbar \bX_t, \by_t\right) \!=\! \bk_{t}(\bx)^\transp(\bK_{t} + \lambda\bI)^{-1}\by_{t}\label{eq:gpucb-posterior-mean},\\
&\sigma_{t}^2\left(\bx \condbar \bX_t\right) \!=\! \kerfunc(\bx, \bx) - \bk_{t}(\bx)^\transp(\bK_{t} + \lambda\bI)^{-1}\bk_{t}(\bx)\label{eq:gpucb-posterior-variance},
\end{align}
where $\lambda  \triangleq \xi^2$, $\bK_t \in \Real^{t \times t}$ is the matrix \todoaout{kernel?}
$[\bK_t]_{i,j} \triangleq \kerfunc(\bx_i, \bx_j)$ constructed from all pairs
$\bx_i,\bx_j$ in $\bX_t$, and
$\bk_t(\bx) \triangleq [\kerfunc(\bx_1, \bx), \dots, \kerfunc(\bx_t, \bx)]^\transp$. Notice that $\bk_t(\bx)$ can be seen as an \textit{embedding} of an arm $\bx$ represented using by the arms $\bx_1,\ldots,\bx_T$ observed so far.

The \emph{\gpucb algorithm}
uses a Gaussian process
${\rm GP}(0, \kerfunc)$ as a prior for~$f$. 
Inspired by the optimism-in-face-of-uncertainty principle, at each time step~$t$, \gpucb uses the posterior ${\rm GP}$ to compute the mean and variance of an arm $\bx_i$ and obtain the score
\begin{align}\label{eq:gpucb.score}
u_{t}(\bx_i) \triangleq \mu_t(\bx_i) + \beta_t\sigma_t(\bx_i),
\end{align}
where we use the short-hand notation $\mu_t(\cdot) \triangleq \mu\left(\,\cdot \condbar \bX_t, \by_t\right)$ and $\sigma_t(\cdot) \triangleq \sigma\left(\,\cdot \condbar \bX_t\right)$. Finally, \gpucb chooses the maximizer $\bx_{t+1} \triangleq \argmax_{\bx_i \in \armset} u_{t}(\bx_i)$
as the next arm to evaluate. According to the score $u_t$, an arm $\bx$ is likely to be selected if it has high mean reward $\mu_t$ \textit{or} high variance $\sigma_t$, i.e., its estimated reward $\mu_t(\bx)$ is very uncertain. As a result, selecting the arm $\bx_{t+1}$ with the largest score trades off between collecting (estimated) large reward (\textit{exploitation}) and improving the accuracy of the posterior (\textit{exploration}). The parameter $\beta_t$ balances between these two objectives and must be properly tuned to guarantee
low regret.
\citet{srinivas2010gaussian} proposes different
approaches for tuning~$\beta_t,$ depending on the assumptions on $f$ and $\armset$. 

While \gpucb is interpretable, simple to implement and provably achieves low regret, it is computationally expensive. In particular,
computing $\sigma_t(\bx)$ has a complexity at least $\Omega(t^2)$
for the matrix-vector product $(\bK_{t-1} + \xi^2\bI)^{-1}\bk_{t-1}(\bx)$.
Multiplying this complexity by $T$ iterations and $\Narm$ arms results in an
overall $\bigotime(AT^3)$  cost, which does not scale to large number of iterations $T$.

 \section{Budgeted Kernel Bandits}\label{sec:bkb}

In this section, we introduce the \bkb (\textit{budgeted kernel bandit}) algorithm, a novel efficient approximation of \gpucb, and we provide guarantees for its computational complexity. The analysis in \Cref{sec:bkb-regret-analysis} shows that \bkb can be tuned to significantly reduce the complexity of \gpucb with a negligible impact on the regret. 
We begin by introducing the two major contributions of this section:
an approximation of the GP-UCB scores supported only by a small subset $\coldict_t$ of \emph{inducing points},
and a method to \textit{incrementally and adaptively} construct an accurate subset $\coldict_t$.

\subsection{The algorithm}\label{sec:bkb-algorithm}
The main complexity bottleneck to compute the scores in \Cref{eq:gpucb.score}
is due to the fact that after~$t$ steps, the posterior GP is  supported on \emph{all} $t$ previously seen arms.
As a consequence, evaluating \Cref{eq:gpucb-posterior-mean,eq:gpucb-posterior-variance}
requires computing a~$t$ dimensional vector $\bk_t(\bx)$ and $t \times t$
matrix $\bK_t$ respectively.
To avoid this dependency we restrict both $\bk_t$ and $\bK_t$
to be supported on a \emph{subset} $\coldict_t$ of $m$ arms.
This approach is a case of the sparse Gaussian process approximation \citep{quinonero-candela2007approximation},
or equivalently, linear bandits constrained to a subspace \citep{kuzborskij2019efficient}.

\textbf{Approximated GP-UCB scores.} Consider a subset of arm $\coldict_t \triangleq \{\bx_i\}_{i=1}^m$ 
and let $\bX_{\coldict_t} \in \Real^{m \times d}$ be the matrix with all arms in $\coldict_t$ as rows. Let $\bK_{\coldict_t} \in \Real^{m \times m}$ be the matrix constructed by evaluating the covariance $k$ between any two pairs of arms in $\coldict_t$ and 
$\bk_{\coldict_t}(\bx) \triangleq [\kerfunc(\bx_1, \bx), \dots, \kerfunc(\bx_m, \bx)]^\transp$.
The \nystrom embedding $\bz_t(\cdot)$
associated with subset $\coldict_t$ is defined as the mapping\footnote{Recall that in the exact version, $\bk_t(\bx)$ can be seen as an embedding of any arm $\bx$ into the space induced by all the $t$ arms selected so far, \ie using all selected points as inducing points.}
\begin{align*}
\vspace{-0.3\baselineskip}
\embfunc_{t}(\cdot) \triangleq \left(\bK_{\coldict_t}^{1/2}\right)^{+}\bk_{\coldict_t}(\cdot) : \Real^d \rightarrow \Real^m,
\end{align*}
where $(\cdot)^{+}$ indicates the pseudo-inverse. We denote with $\embmat_t(\bX_t) \triangleq [\embfunc_{t}(\bx_1), \dots, \embfunc_{t}(\bx_t)]^\transp \in \Real^{t \times m}$ the  associated matrix of points and we define
$\bV_t \triangleq \embmat_t(\bX_t)^\transp\embmat_t(\bX_t) + \lambda\bI$. Then, we approximate the posterior mean, variance, and UCB for the value of the function at $\bx_i$ as
\begin{align}\label{eq:approx.gpucb.score}
\wt{\mu}_t(\bx_i) &\triangleq \embvec_t(\bx_i)^\transp\bV_t^{-1}\embmat_t(\bx_i)^\transp\by_t,\nonumber\\
\wt{\sigma}_t^2(\bx_i) &\triangleq \frac{1}{\lambda}\Big(\kerfunc(\bx_i,\bx_i)
- \embvec_t(\bx_i)^\transp\embmat_t(\bX_t)^\transp\embmat_t(\bX_t)\bV_t^{-1}\embvec_t(\bx_i)\Big),\nonumber\\
\wt{u}_{t}(\bx_i) &\triangleq \wt{\mu}_t(\bx_i) + \wt{\beta}_t\wt{\sigma}_t(\bx_i),
\vspace{-0.3\baselineskip}
\end{align}
where $\wt{\beta}_t$ is appropriately tuned to achieve small regret in the theoretical analysis of \Cref{sec:bkb-regret-analysis}. Finally, at each time step $t$, \bkb selects arm $\wt{\bx}_{t+1} = \argmax_{\bx_i \in \armset}\wt{u}_{t}(\bx_i)$.

Notice that in general, $\wt{\mu}_t$ and $\wt{\sigma}_t$ do \emph{not} correspond to any
GP posterior. In fact, if we were simply replacing the $k(\bx_i,\bx_i)$ in the expression of $\wt{\sigma}_t^2(\bx_i)$ by its value in the \nystrom embedding, i.e., $\bz_t(\bx_i)^\transp \bz_t(\bx_i)$, then we would recover a sparse GP approximation known as the \emph{subset of regressors}.
Using $\bz_t(\bx_i)^\transp \bz_t(\bx_i)$ is known to cause \emph{variance starvation},
as it can severely underestimate the variance
of a test point $\bx_i$ when it is far from the points in $\coldict_t$.
Our formulation of $\wt{\sigma}_t$ is known in
Bayesian world as the \emph{deterministic
training conditional} (DTC), where it is used as a heuristic to prevent variance starvation.
However, DTC does \emph{not} correspond to a GP since it violates consistency \citep{quinonero-candela2007approximation}.
In this work, we justify this approach rigorously, showing that it is crucial
to prove approximation guarantees necessary both for the optimization process and
for the construction of the set of inducing points.

\begin{wrapfigure}[14]{R}{0.52\textwidth}
\vspace{-2.35\baselineskip}
\begin{algorithm2e}[H]\SetAlgoLined
\KwData{Arm set $\armset$, $q$, $\{\beta_t\}_{t=1}^T$}
\KwResult{Arm choices $\history_T = \{(\wt{\bx}_t, y_t)\}$}
Select uniformly at random $\bx_1$ and observe $y_1$\;
Initialize $\coldict_1 = \{\bx_1\}$\;
\For{$t=\{1, \dots, T-1\}$}{
    Compute $\wt{\mu}_{t}(\bx_i)$ and $\wt{\sigma}_{t}^2(\bx_i)$ for all $\bx_i \in \armset$\;
    Select $\wt{\bx}_{t+1} = \argmax_{\bx_i \in \armset}\wt{u}_{t}(\bx_i)$ (Eq.~\ref{eq:approx.gpucb.score})\;
    \For{$i=\{1,\dots,t+1\}$}{
        Set $\wt{p}_{t+1, i} = \qbar\cdot\wt{\sigma}_t^2(\wt{\bx}_{i})$\;
        Draw $q_{t+1,i} \sim \bernoullidist\left(\wt{p}_{t+1,i}\right)$\label{algline:zdraw}\;
        If $q_{t+1} = 1$ include $\wt{\bx}_i$ in $\coldict_{t+1}$\label{algline:expandop}\;
    }
}
\caption{
\bkb\label{alg:bkb}}
\end{algorithm2e}
\end{wrapfigure}
\paragraph{Choosing the inducing points.}
A critical aspect to effectively keep the complexity of \bkb low while still controlling the regret is to carefully choose the inducing points to
include in the subset $\coldict_t$. 
As the complexity of computing~$\wt{u}_t$ scales with the size $m$
of $\coldict_t$, a smaller set gives a faster algorithm.
Conversely, the difference between $\wt{\mu}_t$ and $\wt{\sigma}_t$ and
their exact counterparts depends on the accuracy of the embedding~$\bz_t$, which  increases with the size of the set~$\coldict_t$.
Moreover, even for a fixed $m$, the quality of the embedding greatly depends on \emph{which}
inducing points are included. For instance, selecting 
the same arm as inducing point twice, or two co-linear arms, does not
improve accuracy as the embedding space does not change.
Finally, we need to take into account two important aspects of sequential
optimization when choosing~$\coldict_t$. First, we need to focus our approximation
more on regions of $\armset$ that are relevant to the objective (i.e., high-reward arms). 
Second, as these regions change over time, we need to
keep adapting the composition and size of $\coldict_t$ accordingly.

To address the first objective, we choose to construct
$\coldict_t$ by randomly subsampling only out of the set of arms $\wt{\bX}_t$
evaluated so far. This set will naturally focus on high-reward arms,
as low-reward arms will be selected increasingly less often and will become
a small minority of $\wt{\bX}_t$.
To address the change in focus over time, arms are selected for inclusion in~$\coldict_t$ with a probability proportional to their posterior variance $\sigma_t$
at step $t$, which changes accordingly.
We report the selection procedure in \Cref{alg:bkb},
with the complete \bkb algorithm.

We initialize $\coldict_1 \triangleq \{\wt{\bx}_1\}$ by
selecting an arm uniformly at random. At each
step $t$, after selecting $\wt{\bx}_{t+1}$, we must regenerate
$\coldict_t$ to reflect the changes in $\wt{\bX}_{t+1}$ (\ie resparsify the GP approximation).
Ideally, we would sample each arm in $\wt{\bX}_{t+1}$ proportionally
to $\sigma_{t+1}^2$, but this would be too computationally expensive.
Therefore, we apply two approximations. First we approximate $\sigma_{t+1}^2$ with
$\sigma_{t}^2$. This is equivalent to ignoring the last
arm and does not significantly impact the accuracy.
We can then replace $\sigma_{t}^2$ with $\wt{\sigma}_{t}^2$
which can be computed efficiently, and in practice we simply cache and reuse the $\wt{\sigma}_{t}^2$ already computed when constructing \Cref{eq:approx.gpucb.score}.
Finally, given a parameter $\qbar \geq 1$, we
set our approximate inclusion probability as $\wt{p}_{t+1,i} \triangleq \qbar\wt{\sigma}_t^2(\wt{\bx}_{s})$.
The $\qbar$ parameter is used to increase the inclusion probability
in order to boost the overall success probability of the approximation procedure
at the expense of a small increase in the size of $\coldict_{t+1}$.
Given $\wt{p}_{t+1,i}$, we start from an empty $\coldict_{t+1}$
and iterate over all $\wt{\bx}_i$ for $i \in [t+1]$ drawing $q_{t+1,i}$
from a Bernoulli distribution with probability $\wt{p}_{t+1,i}$.
If $q_{t+1,i} = 1$, $\wt{\bx}_i$ is included in $\coldict_{t+1}$.

Notice that while constructing $\coldict_t$ based on $\sigma_t^2$ is
a common heuristic for sparse GPs,
it has not been yet rigorously justified.
In the next section, we show that this posterior variance sampling approach is equivalent to
$\lambda$-ridge leverage score (RLS) sampling \citep{alaoui2014fast},
a well studied tool in randomized linear algebra. We leverage the known results from this field to prove both accuracy and efficiency guarantees for our selection procedure.

\subsection{Complexity analysis}\label{sec:bkb-complexity-analysis}
\begin{figure*}[!t]
\centering
\minipage{0.32\textwidth}
  \includegraphics[width=\linewidth]{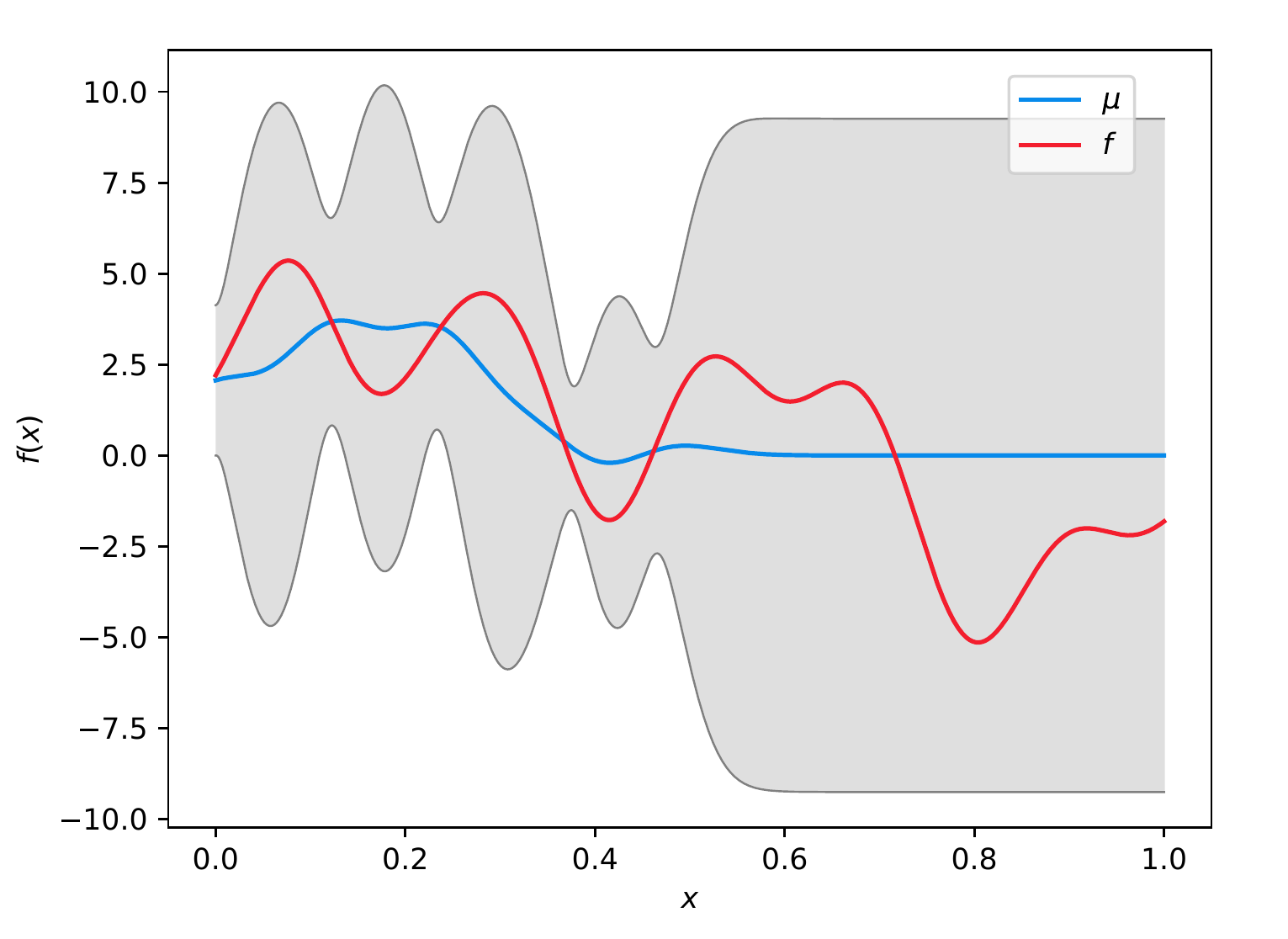}
\endminipage\hfill
\minipage{0.32\textwidth}
  \includegraphics[width=\linewidth]{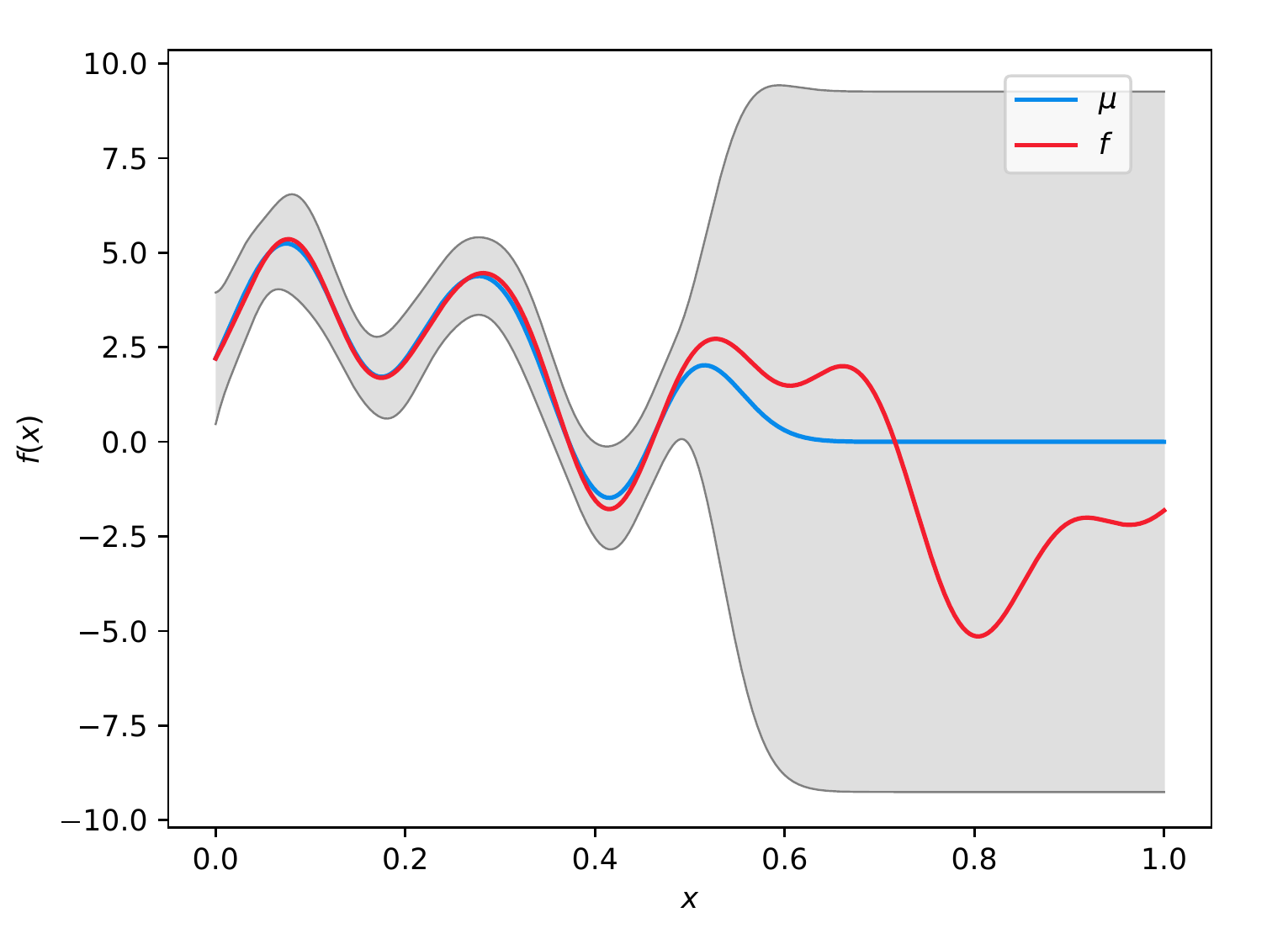}
\endminipage\hfill
\minipage{0.32\textwidth}
  \includegraphics[width=\linewidth]{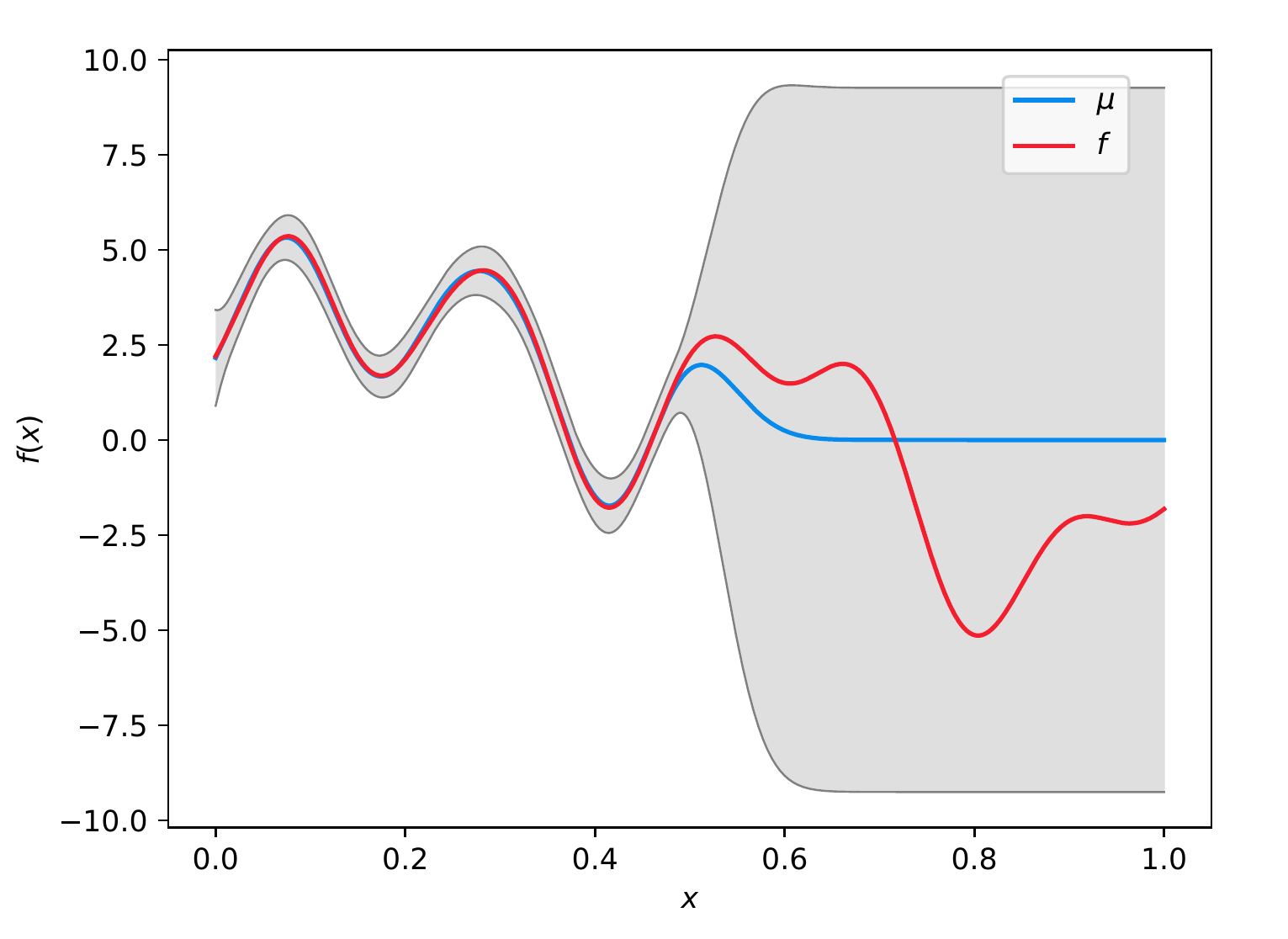}
\endminipage
\vspace{-.5\baselineskip}
\caption{
{\small We simulate a GP on $[0,1] \in \Real$ using Gaussian kernel with bandwidth $\sigma^2 \triangleq100$. We draw $f$ from the GP and give to \bkb $t \in \{6, 63, 215\}$ evaluations sampled uniformly in $[0, 0.5]$. We plot $f$ and $\wt{\mu}_t \pm 3\wt{\sigma}_t$.}
\label{fig:gp-var-starvation}}
\end{figure*}
Let $m_t \triangleq |\coldict_t|$ be the size of the set $\coldict_t$ at step $t$.
At each step,
we first compute the embedding $\embfunc_t(\bx_i)$ of all arms in $\bigotime(\Narm m_t^2 + m_t^3)$ time, which corresponds to one inversion of $\bK_{\coldict_t}^{1/2}$ and the matrix-vector product specific to each arm.
We then rebuild the matrix $\bV_t$ from scratch using all the arms observed so far. In general, it is sufficient to use counters to record the arms pulled so far, rather than the full list of arms, so that $\bV_t$ can be constructed in $\bigotime(\min\{t, \Narm\} m_t^2)$ time. Then, the inverse $\bV_t^{-1}$ is computed in $\bigotime(m_t^3)$ time. We can now efficiently compute 
$\wt\mu_t$, $\wt\sigma_t$, and $\wt{u}_{t}$ for all arms in $\bigotime(\Narm  m_t^2)$ time reusing the embeddings and $\bV_{t}^{-1}$. Finally, computing all $q_{t+1,i}$s and $\coldict_{t+1}$
takes $\bigotime(\min\{t + 1, \Narm\})$ time using the estimated variances $\wt{\sigma}_{t}^2$.
As a result, the per-step complexity is of order $\bigotime\big((A + \min\{t, \Narm\}) m_T^2\big)$.\footnote{Notice that $m_t \leq \min\{t,a\}$ and thus the complexity term $\bigotime(m_t^3)$ is absorbed by the other terms.}
Space-wise, we only need to store the embedded arms and~$\bV_t$ matrix, which takes at most $\bigotime(\Narm m_T)$ space.\\
\noindent
\textbf{The size of $\coldict_T$.}
The size $m_t$ of $\coldict_t$ can be expressed using the $q_{t,i}$ r.v.\@ as the sum
$m_t \triangleq \sum_{i=1}^t q_{t,i}$. In order to provide a bound on the total number of
inducing points, which directly determines the computational complexity of
\bkb, we go through three major steps.

The first is to show that \whp, $m_t$
is close to the sum $\sum_{i=1}^t \wt{p}_{t,i} = \sum_{i=1}^t\qbar\wt{\sigma}_t^2(\wt{\bx}_i)$,
\ie, close to the sum of the probabilities we used to sample
each $q_{t,i}$. However, the different $q_{t,i}$ are \emph{not independent}
and each $\wt{p}_{t,i}$ is itself a r.v. Nonetheless all $q_{t,i}$
are conditionally independent given the previous $t-1$ steps,
and this is sufficient to obtain the result.

The second and a more complex step is to guarantee that the
random sum $\sum_{i=1}^t \wt{\sigma}_t^2(\wt{\bx}_i)$ is close to
$\sum_{i=1}^t \sigma_t^2(\wt{\bx}_i)$ and, at a lower level, that
each individual estimate $\wt{\sigma}_t^2(\cdot)$ is close to
$\sigma_t^2(\cdot)$.
To achieve this we exploit the connection between ridge leverage scores and
posterior variance $\sigma_t^2$. In particular, we show that
the variance estimator $\wt{\sigma}_t^2(\cdot)$ used by \bkb
is a variation of the RLS estimator of \citet{calandriello2017distributed}
for RLS sampling. As a consequence,
we can transfer the strong accuracy and size guarantees of RLS sampling
to our optimization setting (see \Cref{sec:app-proof-kors}).\todoaout{Add ref to appendix.}
Note that anchoring the probabilities to the RLS (\ie the sum of the posterior variances) means that the size of $\coldict_t$ naturally follows the effective dimension of the arms pulled so far.
This strikes an adaptive balance between decreasing each individual probability
to avoid $\coldict_t$ growing too large, while at the same time automatically increasing the effective degrees of freedom of the sparse GP when necessary.

The first two steps lead to $m_t \approx \sum_{i=1}^t \sigma_i^2(\wt{\bx}_i)$, for which we need to derive a more explicit bound.
In the GP analyses, this quantity is bounded using the maximal information gain
$\gamma_T$ after~$T$ rounds. For this, let $\bX_{\armset} \in \Real^{\Narm \times d}$ be
the matrix with all arms as rows,  $\history$ a subset of
these rows, potentially with duplicates, and $\kermatrix_{\history}$
the associated kernel matrix. Then, \citet{srinivas2010gaussian} define
\vspace{-0.3\baselineskip}
\[\gamma_T \triangleq \max_{\history \subset \armset: |\history| = T} \tfrac{1}{2}\logdet(\kermatrix_{\history}/\lambda + \bI),
\vspace{-0.2\baselineskip}
\]
and show that $\sum_{i=1}^t \sigma_i^2(\wt{\bx}_i) \leq \gamma_t$,
and that $\gamma_T$ itself can be bounded for specific $\armset$ and kernel functions, \eg, $\gamma_T \leq \bigotime(\log(T)^{d+1})$ for
Gaussian kernels.
Using the equivalence between RLS and posterior variance $\sigma_t^2$, we can also relate the posterior variance $\sigma_t^2(\wt{\bx}_i)$ of the evaluated arms
to the so-called GP's \emph{effective dimension} $\deff$ or degrees of freedom
\begin{align}\label{eq:def-deff}
\vspace{-0.3\baselineskip}
\deff(\lambda, \wt{\bX}_T) \triangleq \sum\nolimits_{i=1}^t \sigma_t^2(\wt{\bx}_i) = \Tr(\kermatrix_T(\kermatrix_T + \lambda\bI)^{-1}),
\vspace{-0.1\baselineskip}
\end{align}
using the following inequality by \citet{calandriello2017second},
{
\begin{align}\label{eq:deff-logdet-main-text}
\logdet\left(\kermatrix_T/\lambda + \bI\right)
    \leq \Tr(\kermatrix_T(\kermatrix_T + \lambda\bI)^{-1})\left(1 +\log\left(\tfrac{\normsmall{\kermatrix_T}}{\lambda} + 1\right)\right).
\end{align}
}
We use both RLS and $\deff$ to describe \bkb's selection.
We now give the main result of this section.
\begin{theorem}\label{thm:kors-complexity}
For a desired $0 < \varepsilon < 1$, $0 < \delta < 1$, let $\alpha \triangleq (1 + \varepsilon)/(1 - \varepsilon)$.
If we run \bkb with
$\qbar \geq 6\alpha\log(4T/\delta)/\varepsilon^2,$ 
then with probability $1-\delta$, for all $t \in [T]$ and for all $\bx \in \armset,$ we have
\begin{align*}
\sigma_t^2(\bx)/\alpha \leq \wt{\sigma}_t^2(\bx) \leq \alpha\sigma_t^2(\bx)&&
\text{and} && |\coldict_t| \leq 3(1+\kappa^2/\lambda)\alpha\qbar\deff(\lambda, \wt{\bX}_t).
\end{align*}
\end{theorem}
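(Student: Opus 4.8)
The plan is to recast both claims in the language of $\lambda$-ridge leverage scores (RLS) and then transfer the accuracy and size guarantees of approximate RLS (\nystrom) sampling. The starting point is the identity alluded to in \Cref{sec:bkb-complexity-analysis}: for the arms $\wt{\bX}_t$ pulled so far, the exact posterior variance $\sigma_t^2(\bx)$ is exactly the $\lambda$-RLS of $\bx$ with respect to $\wt{\bX}_t$, while the DTC estimate $\wt{\sigma}_t^2(\bx)$ of \Cref{eq:approx.gpucb.score} is exactly the approximate RLS induced by the \nystrom embedding $\embfunc_t$ supported on $\coldict_t$. Call $\coldict_t$ \emph{$\eps$-accurate} if the induced \nystrom approximation preserves $\bK_t$ in the regularized spectral sense (the precise condition, on the feature-space projection onto $\mathrm{span}(\coldict_t)$, is the one stated in \Cref{sec:app-proof-kors}). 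A standard randomized-linear-algebra lemma then gives: if $\coldict_t$ is $\eps$-accurate, then $\sigma_t^2(\bx)/\alpha\le\wt{\sigma}_t^2(\bx)\le\alpha\,\sigma_t^2(\bx)$ for every $\bx\in\armset$. Hence the first claim reduces to showing that, with probability $1-\delta$, the set $\coldict_t$ produced by \Cref{alg:bkb} is $\eps$-accurate for all $t\in[T]$ simultaneously.

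I would prove this by induction on $t$. The base case is immediate: $\coldict_1=\{\wt{\bx}_1\}$ and $\bK_1$ is $1\times1$, so its \nystrom reconstruction is exact. For the inductive step, assume $\coldict_t$ is $\eps$-accurate; then by the lemma the estimates at step $t$ are $\alpha$-accurate, so the inclusion probabilities obey $\wt{p}_{t+1,i}=\qbar\,\wt{\sigma}_t^2(\wt{\bx}_i)\ge(\qbar/\alpha)\,\sigma_t^2(\wt{\bx}_i)$. Two facts let me view $\coldict_{t+1}$ as a valid over-sampled RLS sketch of $\wt{\bX}_{t+1}$: (i) appending the single arm $\wt{\bx}_{t+1}$ is a rank-one update of the kernel matrix, under which the $\lambda$-RLS of the already-present rows are non-increasing, so $\sigma_t^2(\wt{\bx}_i)\ge\sigma_{t+1}^2(\wt{\bx}_i)$ and hence $\wt{p}_{t+1,i}\ge(\qbar/\alpha)\,\sigma_{t+1}^2(\wt{\bx}_i)$ (the new arm's own RLS is likewise dominated by $\alpha\,\wt{\sigma}_t^2(\wt{\bx}_{t+1})$); (ii) conditionally on $\filtration_t$ — which already fixes $\wt{\bX}_{t+1}$ and every $\wt{p}_{t+1,i}$ — the Bernoulli draws $\{q_{t+1,i}\}_{i\in[t+1]}$ are independent, so $\coldict_{t+1}$ is genuinely an independent RLS sample with over-sampling factor $\qbar/\alpha$. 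Substituting $\qbar\ge6\alpha\log(4T/\delta)/\eps^2$ into the RLS-sampling guarantee makes $\coldict_{t+1}$ $\eps$-accurate with probability at least $1-\delta/(2T)$. A union bound over $t\in[T]$ yields that every $\coldict_t$ is $\eps$-accurate, hence every $\wt{\sigma}_t^2$ is $\alpha$-accurate, with probability at least $1-\delta/2$, which is the first conclusion.

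For the size bound I would work on the above event and additionally condition on $\filtration_{t-1}$, which fixes $\wt{\bx}_1,\dots,\wt{\bx}_t$ and the probabilities $\wt{p}_{t,i}$. Then $|\coldict_t|=\sum_{i=1}^t q_{t,i}$ is a sum of independent $\bernoullidist(\wt{p}_{t,i})$ variables of total mean $\sum_{i=1}^t\wt{p}_{t,i}=\qbar\sum_{i=1}^t\wt{\sigma}_t^2(\wt{\bx}_i)\le\qbar\alpha\sum_{i=1}^t\sigma_t^2(\wt{\bx}_i)=\qbar\alpha\,\deff(\lambda,\wt{\bX}_t)$, by \Cref{eq:def-deff}. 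A multiplicative Chernoff bound then shows $|\coldict_t|$ does not exceed $3$ times this mean with probability at least $1-\delta/(2T)$ — valid because $\qbar=\Omega(\log(T/\delta)/\eps^2)$ forces the mean to be large enough to concentrate and because $\wt{\sigma}_t^2(\bx)\le\kappa^2/\lambda$ keeps the Bernoulli parameters bounded; the remaining $(1+\kappa^2/\lambda)$ factor in the statement absorbs the $\kappa^2/\lambda$ contribution of the freshest arm to the variance sum together with the Chernoff slack when $\deff(\lambda,\wt{\bX}_t)$ is small. Union bounding over $t\in[T]$ and intersecting with the accuracy event gives the full statement with probability $1-\delta$.

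The main obstacle is the inductive step, namely handling its self-referential structure: the accuracy of $\coldict_{t+1}$ depends, through the sampling probabilities, on the accuracy of $\coldict_t$, while those probabilities are themselves built from the \emph{old} dictionary and from $\sigma_t^2$ used as a proxy for $\sigma_{t+1}^2$. Closing this loop cleanly requires the RLS monotonicity / one-step stability argument of (i), so that $\wt{p}_{t+1,i}$ genuinely over-covers the true RLS of $\wt{\bX}_{t+1}$, together with careful bookkeeping of per-step failure probabilities so that a single union bound over the $T$ rounds suffices — which is exactly where the $\log(4T/\delta)$ in the lower bound on $\qbar$ comes from. The supporting variance-equivalence lemma (DTC variance $=$ \nystrom RLS, and $\eps$-spectral accuracy $\Rightarrow$ $\alpha$-multiplicative RLS accuracy) does the remaining work and is imported, with minor adaptation, from the RLS-sampling analyses of \citet{calandriello2017distributed,calandriello2017second}.
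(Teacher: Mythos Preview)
Your approach is essentially the paper's: reduce the variance claim to $\varepsilon$-accuracy of $\coldict_t$, handle the self-reference via the decomposition $A_t\cap A_{t-1}^{\comp}$ (your ``induction'' is exactly this in disguise), condition on $\filtration_{t-1}$ and apply matrix Bernstein for accuracy, then a Chernoff bound on the conditionally independent Bernoullis for the size. Two corrections are needed in the size step.

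First, an indexing slip: the sampling probabilities that build $\coldict_t$ are $\wt{p}_{t,i}=\qbar\,\wt{\sigma}_{t-1}^2(\wt{\bx}_i)$, not $\qbar\,\wt{\sigma}_t^2(\wt{\bx}_i)$. Conditioned on $\filtration_{t-1}$ it is $\wt{\sigma}_{t-1}^2$ that is fixed; $\wt{\sigma}_t^2$ still depends on the $q_{t,i}$'s you are about to draw.

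Second, and relatedly, your explanation of the $(1+\kappa^2/\lambda)$ factor is wrong. It is not the ``freshest arm's contribution'' nor Chernoff slack. Once you fix the index, on the good event you get $\sum_i\wt{p}_{t,i}\le\qbar\alpha\sum_i\sigma_{t-1}^2(\wt{\bx}_i)$, which is a sum of time-$(t-1)$ variances, whereas $\deff(\lambda,\wt{\bX}_t)=\sum_i\sigma_t^2(\wt{\bx}_i)$ is at time $t$. Bridging the two requires the \emph{reverse} direction of one-step stability (\Cref{prop:tau-decrease}): $\sigma_{t-1}^2(\wt{\bx}_i)\le(1+\kappa^2/\lambda)\,\sigma_t^2(\wt{\bx}_i)$ for every $i$, and this multiplicative factor on \emph{all} terms is exactly the $(1+\kappa^2/\lambda)$ in the statement. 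You already used the forward direction $\sigma_t^2\ge\sigma_{t+1}^2$ in the accuracy step; here you need its companion inequality.
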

\textbf{Computational complexity.} We already showed that \bkb's implementation
with \nystrom embedding requires $\bigotime(T(\Narm + \min\{t, \Narm\}) m_T^3)$ time and $\bigotime(\Narm m_T)$ space.
Combining this with \Cref{thm:kors-complexity} and the bound $m_T \leq \abigotime(\deff)$,
we obtain a $\abigotime(T\Narm \deff^2 + \min\{t, \Narm\}) \deff^3)$ time complexity. Whenever $\deff \ll T$ and 
$T \ll \Narm$, this is essentially a quadratic $\bigotime(T^2)$ runtime, a
large improvement over the quartic $\bigotime(T^4) \leq \bigotime(T^3\Narm)$ runtime of \gpucb.\\*
\textbf{Tuning $\qbar$.} Note that although $\qbar$ must satisfy the condition
of \Cref{thm:kors-complexity} for the result to hold, it is quite robust
to uncertainty on the desired horizon $T$. In particular, the bound
holds for \emph{any} $\varepsilon > 0$, and even if we continue
updating $\coldict_T$ after the $T$-th step, the bound still holds by
implicitly increasing the parameter $\varepsilon$.
Alternatively, after the $T$-th iteration the user can suspend the algorithm,
increase $\qbar$ to suit the new desired horizon,
and rerun only the subset selection on the arms selected so far. \\ \noindent
\textbf{Avoiding variance starvation.} Another important consequence of \Cref{thm:kors-complexity}
is that \bkb's variance estimate is always close to the exact one up to a small constant factor.
To the best of our knowledge, it makes \bkb the first efficient and general
GP algorithm that provably avoids variance starvation, which can be caused by two
sources of error. The first source is the degeneracy, \ie, low-rankness of the GP approximation
which causes the estimate to grow over-confident when the number of observed
points grows and exceeds the degrees of freedom of the GP.
\bkb \emph{adaptively chooses its degrees of freedom} as the size of $\coldict_t$ scales with the effective dimension.
The second source of error arises when a point is far away from $\coldict_t$.
Our use of a DTC variance estimator avoids under-estimation before we update
the subset $\coldict_t$. Afterward, we can use guarantees on the quality of~$\coldict_t$
to guarantee that we do not over-estimate the variance too much, exploiting a similar approach used
to guarantee accuracy in RLS estimation.
Both problems, and \bkb's accuracy, are highlighted in \Cref{fig:gp-var-starvation}
using a benchmark experiment proposed by \citet{wang2018batched}.\\*
\textbf{Incremental dictionary update.} At each step $t$, \bkb recomputes the dictionary $\coldict_{t+1}$ from scratch by sampling each of the arms pulled so far with a suitable probability $\wt p_{t+1,i}$. A more efficient variant would be to build $\coldict_{t+1}$ by adding the new point $\: x_{t+1}$ with probability $\wt p_{t+1,t+1}$ and including the points in $\coldict_{t}$ with probability $\wt p_{t+1,i}/\wt p_{t,i}$. This strategy is used in the streaming setting 
to avoid storing all points observed so far and incrementally update the dictionary~\citep[see][]{calandriello2017distributed}. Nonetheless, the stream of points, although arbitrary, is assumed to be generated \textit{independently} from the dictionary itself. On the other hand, in our bandit setting, the points $\wt{\:x}_1, \wt{\:x}_2, \ldots$ are actually chosen by the learner depending on the dictionaries built over time, thus building a strong dependency between the stream of points and the dictionary itself. How to analyze such dependency and whether the accuracy of the inducing points is preserved in this case remains as an open question. 
Finally, notice that despite being more elegant and efficient, such incremental dictionary update would not significantly reduce the asymptotic computational complexity, since maximiming $u_t$, whose main cost is computing the posterior variance for each arm, would still dominate the overall runtime.
 
\section{Regret Analysis}
\label{sec:bkb-regret-analysis}
We are now ready to present the second main contribution of this paper,
a bound on the regret achieved by \bkb.
To prove our result we additionally assume that the reward function
$f$ has a bounded norm, i.e.,
$\normsmall{f}_{\rkhs}^2 \triangleq \langle f, f\rangle < \infty$. We use an upper-bound $\normsmall{f}_{\rkhs} \leq F$ to properly tune 
$\wt{\beta}_t$ to the range of the rewards. If $F$ is not known in advance, standard guess-and-double techniques apply.

\begin{theorem}\label{thm:main-regret}
Assume $\normsmall{f}_{\rkhs} \leq F < \infty$.
For any desired $0 < \varepsilon < 1$, $0 < \delta < 1$,
$0 < \lambda$,
let $\alpha \triangleq (1 + \varepsilon)/(1 - \varepsilon)$
and $\qbar \geq 6\alpha\log(4T/\delta)/\varepsilon^2$.
If we run \bkb with\todoaout{What is $\xi$?}
\begin{align*}
\wt{\beta}_t \triangleq
2\xi\sqrt{\alpha\log(\kappa^2t)\left(\sum\nolimits_{s=1}^t \wt{\sigma}_{t}^2(\wt{\bx}_s)\right) + \log(1/\delta)}
+ \left(1 + \tfrac{1}{\sqrt{1-\varepsilon}}\right)\sqrt{\lambda}\fnorm,
\end{align*}
then, with probability of at least $1-\delta$, the regret $R_T$ of \bkb is bounded as
\begin{align*}
R_T \leq 2(2\alpha)^{3/2}\sqrt{T}\left(\xi\deff(\lambda,\wt{\bX}_T)\log(\kappa^2T) +\sqrt{\lambda \fnorm^2 \deff(\lambda,\wt{\bX}_T)\log(\kappa^2T)}+ \xi\log(1/\delta) \right).
\end{align*}\end{theorem}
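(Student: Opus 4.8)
The plan is to reproduce the classical optimism-based regret analysis of \gpucb \citep{srinivas2010gaussian,abbasi2011improved,chowdhury2017kernelized}, replacing the two places where it invokes the exact posterior by the sketched quantities $\wt{\mu}_t,\wt{\sigma}_t$ and absorbing the resulting error with \Cref{thm:kors-complexity}. Throughout I would work on the intersection of the success event of \Cref{thm:kors-complexity} and a self-normalized concentration event; by a union bound (after a harmless adjustment of the numerical constant in $\qbar$) this intersection has probability at least $1-\delta$, and on it the variance sandwich $\sigma_t^2(\bx)/\alpha\le\wt{\sigma}_t^2(\bx)\le\alpha\sigma_t^2(\bx)$ holds for all $t\le T$ and all $\bx\in\armset$.

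The first and main step is to establish \emph{valid confidence intervals}: $|\wt{\mu}_t(\bx)-f(\bx)|\le\wt{\beta}_t\wt{\sigma}_t(\bx)$ for every $t$ and $\bx$. Let $\bP_t$ be the orthogonal projection onto the span of $\{\featvec(\dictentry):\dictentry\in\coldict_t\}$; the \nystrom construction makes $\wt{\mu}_t$ the ridge-regression mean obtained after replacing every feature $\featvec(\cdot)$ by its projection $\bP_t\featvec(\cdot)$, while the DTC variance satisfies the exact identity $\wt{\sigma}_t^2(\bx)=\tfrac1\lambda\|(\bI-\bP_t)\featvec(\bx)\|^2+\|\embvec_t(\bx)\|_{\bV_t^{-1}}^2$. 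I would split $\wt{\mu}_t(\bx)-f(\bx)$ into three pieces: (a) a test-point projection term $\langle(\bI-\bP_t)\featvec(\bx),f\rangle_\rkhs$, (b) the ridge-plus-projection bias on the evaluated points, and (c) the noise term. Piece (a) is at most $\|f\|_\rkhs\|(\bI-\bP_t)\featvec(\bx)\|\le\fnorm\sqrt\lambda\,\wt{\sigma}_t(\bx)$ precisely because the DTC estimator retains the full $\kerfunc(\bx,\bx)$ term, so $\lambda\wt{\sigma}_t^2(\bx)$ dominates the squared norm of the out-of-dictionary part of $\featvec(\bx)$ — this is where DTC rather than subset-of-regressors is essential, and it is what prevents variance starvation at test points. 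Piece (b) is handled via the relative-error spectral approximation $\phimat{t}^\transp\phimat{t}+\lambda\bI\preceq(1-\eps)^{-1}(\embmat_t(\bX_t)^\transp\embmat_t(\bX_t)+\lambda\bI)$ guaranteed by RLS sampling \citep{calandriello2017distributed} (the same bound underlying \Cref{thm:kors-complexity}), which converts it into the $(1+\tfrac1{\sqrt{1-\eps}})\sqrt\lambda\fnorm\,\wt{\sigma}_t(\bx)$ term. Piece (c) is controlled with the self-normalized martingale inequality of \citet{abbasi2011improved} applied in the sketched feature space, yielding a bound of the form $2\xi\sqrt{\tfrac12\logdet(\bV_t/\lambda)+\log(1/\delta)}\,\wt{\sigma}_t(\bx)$; I would then replace $\logdet(\bV_t/\lambda)$ by the computable $\sum_{s=1}^t\wt{\sigma}_t^2(\wt{\bx}_s)$, using that the sketched leverage scores $\|\embvec_t(\wt{\bx}_s)\|_{\bV_t^{-1}}^2$ are dominated by the DTC variances and invoking \Cref{eq:deff-logdet-main-text} (with the sketched Gram matrix in place of $\kermatrix_T$ and operator norm at most $t\kappa^2$) to produce the $\log(\kappa^2 t)$ factor. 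Summing the three pieces recovers exactly the prescribed $\wt{\beta}_t$.

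Given valid confidence intervals, the remainder is routine. Optimism: since $\wt{\bx}_{t+1}=\argmax_{\bx_i\in\armset}\wt{u}_t(\bx_i)$ and $\bx_\star\in\armset$, we get $\wt{u}_t(\wt{\bx}_{t+1})\ge\wt{u}_t(\bx_\star)=\wt{\mu}_t(\bx_\star)+\wt{\beta}_t\wt{\sigma}_t(\bx_\star)\ge f(\bx_\star)$, hence the instantaneous regret at step $t+1$ is at most $\wt{u}_t(\wt{\bx}_{t+1})-f(\wt{\bx}_{t+1})\le 2\wt{\beta}_t\wt{\sigma}_t(\wt{\bx}_{t+1})$, while the single uniformly-random pull $\wt{\bx}_1$ costs at most $2\fnorm\kappa$, which is lower order. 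Next I would bound $\wt{\beta}_t$ uniformly: by the variance sandwich of \Cref{thm:kors-complexity}, \Cref{eq:def-deff}, and monotonicity of $\deff$ in the data, $\sum_{s=1}^t\wt{\sigma}_t^2(\wt{\bx}_s)\le\alpha\,\deff(\lambda,\wt{\bX}_t)\le\alpha\,\deff(\lambda,\wt{\bX}_T)$, so $\wt{\beta}_t\le\bar\beta\triangleq 2\xi\sqrt{\alpha^2\deff(\lambda,\wt{\bX}_T)\log(\kappa^2T)+\log(1/\delta)}+(1+\tfrac1{\sqrt{1-\eps}})\sqrt\lambda\fnorm$. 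Cauchy--Schwarz together with $\wt{\sigma}_t^2\le\alpha\sigma_t^2$ then gives $R_T\le 2\fnorm\kappa+2\bar\beta\sqrt{\alpha\,T\sum_{t=2}^{T}\sigma_{t-1}^2(\wt{\bx}_t)}$, and the standard elliptical-potential step ($\sigma_{t-1}^2(\wt{\bx}_t)/\lambda\le\kappa^2/\lambda$ and $z\le\tfrac{\kappa^2/\lambda}{\log(1+\kappa^2/\lambda)}\log(1+z)$ on that range, followed by \Cref{eq:deff-logdet-main-text}) bounds $\sum_t\sigma_{t-1}^2(\wt{\bx}_t)$ by $\bigotime(\deff(\lambda,\wt{\bX}_T)\log(\kappa^2T))$. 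Substituting, using $1+\tfrac1{\sqrt{1-\eps}}\le2\sqrt\alpha$, $\sqrt{\alpha^2}=\alpha$, and $\sqrt{a+b}\le\sqrt a+\sqrt b$, and absorbing the $\kappa,\lambda$-dependent numerical constants, yields the stated bound.

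The main obstacle is the confidence-interval step, and within it two subtleties: (i) the self-normalized bound must be expressed through the \emph{approximate} variances $\wt{\sigma}_t^2$ so that $\wt{\beta}_t$ is actually computable by the algorithm, which forces one to push the $(1\pm\eps)$ sketch approximation through the log-determinant of the data-dependent, varying-dimension Gram matrix $\bV_t$; and (ii) one must show that projecting onto $\coldict_t$ inflates the bias only by the benign factor $1/\sqrt{1-\eps}$, which is exactly where the accuracy guarantee of posterior-variance (RLS) sampling is needed, together with the DTC form of $\wt{\sigma}_t^2$ that neutralizes the projection error at the test point. Everything downstream of valid confidence intervals is the textbook \gpucb calculation.
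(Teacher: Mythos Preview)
Your overall architecture matches the paper's: establish that $\bw_\star\in\wt{C}_t$ (equivalently, pointwise confidence intervals with radius $\wt{\beta}_t\wt{\sigma}_t$), then run the standard optimism calculation and convert predictive variances back to $\deff$. Pieces (a) and (b) of your decomposition are fine and correspond to what the paper calls the bias term, handled via the projection-residual bound $\bI-\bP_t\preceq\tfrac{\lambda}{1-\varepsilon}(\phimat_t^\transp\phimat_t+\lambda\bI)^{-1}$ (this is the precise statement you need, not merely the spectral equivalence $\bA_t\preceq\alpha\abA_t$, though both follow from $\varepsilon$-accuracy of the dictionary).

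The genuine gap is in piece (c). You propose to apply the self-normalized martingale inequality of \citet{abbasi2011improved} ``in the sketched feature space'' to obtain $\logdet(\bV_t/\lambda)$. This does not work, and the paper is explicit about why: the sketched feature of the $s$-th observation is $\bP_t\featvec(\wt{\bx}_s)$ (equivalently $\embvec_t(\wt{\bx}_s)$), and the projection $\bP_t$ depends on the dictionary $\coldict_t$, which in turn depends on \emph{all} the randomness up to round $t$---the noise $\eta_1,\ldots,\eta_{t-1}$, the Bernoulli draws, and the selected arms. Hence for $s<t$ the sketched feature is \emph{not} $\filtration_{s-1}$-measurable, the sequence $\{\bP_t\featvec(\wt{\bx}_s)\}_{s\le t}$ is not predictable, and the martingale inequality cannot be applied to it. Your listed ``subtlety (i)'' concerns pushing approximations through a log-determinant, but the real obstruction is measurability, and it sits strictly upstream of any log-determinant manipulation.

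The paper's fix is short but essential: use that $\bP_t$ is an orthogonal projection to show the deterministic domination
\[
\normsmall{\aphimat_t^\transp\boldsymbol{\eta}_t}_{\abA_t^{-1}}
\;\le\;
\normsmall{\phimat_t^\transp\boldsymbol{\eta}_t}_{\bA_t^{-1}},
\]
obtained by writing both sides as $\boldsymbol{\eta}^\transp(\bI-\lambda(\cdot+\bI)^{-1})\boldsymbol{\eta}/\lambda$ and using $\phimat_t\bP_t\phimat_t^\transp\preceq\phimat_t\phimat_t^\transp$. The right-hand side is now a bona fide self-normalized martingale in the \emph{exact} feature space, to which \citet{abbasi2011improved} applies, yielding $\logdet(\bA_t/\lambda)=\logdet(\kermatrix_t/\lambda+\bI)$ rather than $\logdet(\bV_t/\lambda)$. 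Only then does one invoke \Cref{eq:deff-logdet-main-text} and \Cref{thm:kors-complexity} to replace this by the computable $\alpha\sum_s\wt{\sigma}_t^2(\wt{\bx}_s)\cdot\log(\kappa^2 t)$. Once you insert this step, the rest of your plan goes through essentially as written.
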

\vspace{-0.2\baselineskip}
\Cref{thm:main-regret} shows that \bkb achieves exactly the same regret
as (exact) \gpucb up to small $\alpha$ constant and $\log(\kappa^2T)$ multiplicative factor.\footnote{Here we derive a \textit{frequentist} regret bound and thus we compare with the result of \citet{chowdhury2017kernelized} rather than the original \textit{Bayesian} analysis of~\citet{srinivas2010gaussian}.} 
For instance, setting $\varepsilon = 1/2$ results in a bound only $3\log(T)$ times
larger than the one of \gpucb. At the same time, the choice $\varepsilon = 1/2$ only accounts for a constant factor $12$ in the per-step computational complexity, which is still dramatically reduced from $t^2\Narm$ to $\deff^2\Narm$. Note also that even if we send $\varepsilon$ to $0$, in the worst case we will
include all arms selected so far, \ie, $\coldict_t = \{\wt{\bX}_t\}$.
Therefore, even in this case \bkb's runtime does not grow unbounded, but
\bkb transforms back into exact \gpucb.
Moreover, we show that $\deff(\lambda,\wt{\bX}_T) \leq \logdet(\kermatrix_T/\lambda + \bI)$, as in \Cref{prop:logdet-to-deff} in the appendix,
so any bound on $\logdet(\kermatrix_T/\lambda + \bI)$
available for \gpucb applies directly to \bkb. This means that up to an extra $\log T$ factor, we 
match \gpucb's $\abigotime(\log(T)^{2d})$ rate for the Gaussian kernel,
$\abigotime(T^{\frac{1}{2}\frac{2\nu + 3d^2}{2\nu + d^2}})$ rate for the Mat\' ern kernel,
and $\abigotime(d\sqrt{T})$ for the linear kernel. While these bounds are not minimax optimal, they closely follow the lower bounds derived in~\citet{scarlett2017lower}. On the other hand, in the case of linear kernel (i.e., the linear bandits) we nearly match the lower bound of~\citet{dani2008stochastic}. 

Another interesting aspect of \bkb is that computing the trade-off parameter $\wt{\beta}_t$ can be done efficiently. Previous methods bounded this quantity with a loose (deterministic) upper bound, e.g., $\bigotime(\log(T)^d)$ for Gaussian kernels, to avoid the large cost of computing $\logdet(\kermatrix_T/\lambda + \bI)$.
In our~$\wt{\beta}_t$, we bound the $\logdet$ by $\deff$, which is then bounded by $\sum\nolimits_{s=1}^t \wt{\sigma}_{t}^2(\bx_s)$, see Thm.\,\ref{thm:kors-complexity}, where all $\wt{\sigma}_t^2$s are already efficiently computed at each step. While this is up to $\log t$ larger than the exact $\logdet$, it is  \emph{data adaptive} and much smaller than the known worst case upper bounds.

It it crucial, that our regret guarantee is achieved without requiring an
\emph{increasing accuracy} in our approximation. One would expect that to obtain
a sublinear regret the error induced by the approximation should decrease as $1/T$.
Instead, in \bkb, the constants $\varepsilon$ and $\lambda$ that govern the accuracy level are fixed and thus it is not possible to guarantee that $\wt{\mu}_t$ will ever get close to $\mu_t$ everywhere. Adaptivity is the key: we can afford the same approximation level at every step because accuracy is actually increased only on a specific part of the arm set. For example, if a suboptimal arm is selected too often
due to bad approximation, it will be eventually included in $\coldict_t$. After the inclusion, the approximation accuracy in the region of the suboptimal arm increases, and it would not be selected anymore. As the set of inducing points is updated \textit{fast enough}, the impact of inaccurate approximations is limited over time, thus preventing large regret to accumulate.
Note that this is a significant divergence from existing results. In particular
approximation bounds that are uniformly accurate for all $\bx_i \in \armset$,
such as those obtained with quadrature FF \citep{mutny2018efficient},
rely on packing arguments. Due to the nature of packing, this usually causes
the runtime or regret to scale exponentially with the input dimension $d$,
and requires  kernel $\kerfunc$ to have a specific structure, e.g., to be stationary.
Our new analysis avoids both of these problems.

Finally, we point out that the adaptivity of \bkb allows drawing an interesting connection between learning and computational complexity. In fact, both the regret and the computation of \bkb scale with the log-determinant and effective dimension of $\kermatrix_T$, which is related to the effective dimension of the sequence of arms selected over time. As a result, if the problem is difficult from a learning point of view (i.e., the regret is large because of large log-determinant), then \bkb automatically adapts the set $\coldict_t$ by including many more inducing points to guarantee the level of accuracy needed to solve the problem. Conversely, if the problem is simple (i.e., small regret), then \bkb can greatly reduce the size of $\coldict_t$ and achieve the derived level of accuracy.
\\[-0.225in]
\subsection{Proof sketch}\label{sec:approx-ellipsoid-subsec}

We build on the \gpucb analysis of ~\citet{chowdhury2017kernelized}. Their analysis relies on a confidence interval formulation of \gpucb 
that is more conveniently expressed using an explicit feature-based representation of the GP. For any GP with covariance $\kerfunc$, there is a corresponding RKHS~$\rkhs$ with $\kerfunc$ as its kernel function. Furthermore, any kernel function $\kerfunc$ is associated to a non-linear feature map
$\featmap(\cdot): \Real^d \rightarrow \rkhs$
such that $\kerfunc(\bx, \bx') = \featmap(\bx')^\transp\featmap(\bx')$.
As a result, any reward function $f \in \rkhs$ can be written as $f(\bx) = \featmap(\bx)^\transp\bw_\star$, where $\bw_\star \in \rkhs$.

\paragraph{Confidence-interval view of \gpucb.}
Let $\phimat{t} \triangleq [\featmap(\bx_1), \dots, \featmap(\bx_{t})]^\transp$ be the matrix $\bX_t$ after the application of $\featmap(\cdot)$ to each row.
We can then define the regularized design matrix  as $\bA_t \triangleq \phimat{t}^\transp\phimat{t} + \lambda\bI$, and then compute the regularized least-squares estimate as
\vspace{-.35\baselineskip}
\begin{align*}
\wh{\bw}_{t} &\triangleq  \argmin_{\bw \in \rkhs} \sum\nolimits_{i=1}^t (y_i - \phivec{i}^\transp\bw)^2 + \lambda \normsmall{\bw}_2^2
= \bA_t^{-1}\phimat{t}^\transp\by_t.
\end{align*}\\[-\baselineskip]
We define the \emph{confidence interval}
$C_t$ as the ellipsoid induced by $\bA_t$ with center $\wh{\bw}_t$ and radius $\beta_t$
\begin{align}\label{def:oful-confidence-ellipsoid}
C_t &\triangleq \left\{\bw : \normsmall{\bw - \wh{\bw}_{t}}_{\bA_{t}} \leq \beta_{t}\right\}, \quad \quad \beta_t \triangleq \lambda^{1/2}\fnorm + R\sqrt{2(\logdet(\bA_t/\lambda) + \log(1/\delta)},
\end{align}
where the radius $\beta_t$ is such that $\bw_\star \in C_t$ \whp\!\citep{chowdhury2017kernelized}.
Finally, using Lagrange multipliers we reformulate the \gpucb scores as
\vspace{-.35\baselineskip}
\begin{align}\label{eq:ucb-oful}
u_{t}(\bx_i) &= \max_{\bw \in C_t} \phivec{i}^\transp\bw
=\overbracket{\phivec{i}^\transp\wh{\bw}_{t}}^{\mu_t(\bx_i)} + \beta_{t}\overbracket{\sqrt{\phivec{i}^\transp\bA_{t}^{-1}\phivec{i}}.}^{\sigma_t(\bx_i)}
\end{align}

\noindent
\textbf{Approximating the confidence ellipsoid.} 
Consider subset of arm $\coldict_t = \{\bx_i\}_{i=1}^m$
chosen by \bkb at each step
and denote by $\bX_{\coldict_t} \in \Real^{m \times d}$ the matrix with all arms in $\coldict_t$ as rows.
Let  $\wt{\rkhs}_t \triangleq \Ran(\phimat{\coldict_t})$ be
the smaller $m$-rank RKHS spanned by $\phimat{\coldict_t};$
and by $\bP_t$ the symmetric orthogonal projection operator on $\wt{\rkhs}_t$.
We then define an \emph{approximate} feature map
$\wt{\featmap}_t(\cdot) \triangleq \bP_t\featmap(\cdot): \Real^d \rightarrow \wt{\rkhs}_t$
and associated approximations of $\bA_t$ and $\wh{\bw}_t$ as
\begin{align}
\wt{\bA}_t &\triangleq \aphimat{t}^\transp\aphimat{t} + \lambda\bI\label{eq:atilda-def},\\
\abw_t &\triangleq \argmin_{\bw \in \rkhs} \sum_{i=1}^t (y_i - \wt{\featmap}(\bx_i)^\transp\bw)^2 + \lambda \normsmall{\bw}_2^2
=\abA_t^{-1}\aphimat{t}^\transp\by_t\label{eq:wtilda-def}.
\end{align}\\[-\baselineskip]
This leads to an approximate confidence ellipsoid
$\wt{C}_t \triangleq  \big\{\bw : \normsmall{\bw - \abw_{t}}_{\abA_{t}} \leq \wt{\beta}_t\big\}.$ 
A subtle element in these definitions is that while $\aphimat{t}^\transp\aphimat{t}$
and $\abw_t$ are now \emph{restricted} to $\wt{\rkhs}_t$, the identity operator
$\lambda\bI$ in the regularization of $\wt{\bA}_t$ still \emph{acts over the whole} $\rkhs$,
and therefore $\wt{\bA}_t$ does not belong to $\wt{\rkhs}_t$ and remains
full-rank and invertible. This immediately leads to the usage of $k(\bx_i,\bx_i)$ in the definition of $\wt\sigma$ in Eq.\,\ref{eq:approx.gpucb.score}, instead of the its approximate version using the \nystrom embedding.

\textbf{Bounding the regret.}
To find an appropriate $\wt{\beta}_t$ we follow an approach similar to the one of
\citet{abbasi2011improved}. Exploiting the relationship
$y_t = \wt{\featmap}(\wt{\bx}_t)^\transp\bw_\star + \eta_t$,
we bound
\vspace{-.35\baselineskip}
\begin{align*}
\normsmall{\bw_\star - \wt{\bw}_t}^2_{\abA_t}
\leq \overbracket{\lambda^{1/2}\normsmall{\bw_\star}}^{(a)}
+ \overbracket{\normsmall{\aphimat{t}\boldsymbol{\eta}_t}_{\abA_t^{-1}}}^{(b)}
+\overbracket{\normsmall{\phimat{t}^\transp}_{\bI - \bP_t}\cdot\normsmall{\bw_\star}}^{(c)}\!.
\end{align*}
Both $(a)$ and $(b)$ are present in \gpucb and \oful's analysis.
The first term $(a)$ is due to the bias introduced in the least-square estimator $\wt{\bw}_t$
by the regularization $\lambda$.
Then, term $(b)$ is due to the noise in the reward observations. 
Note that the same term $(b)$ appears in \gpucb's analysis
as $\normsmall{\phimat{t}\boldsymbol{\eta}_t}_{\bA_t^{-1}}$
and it is bounded by $\logdet(\bA_t/\lambda)$
using self-normalizing concentration inequalities~\cite{chowdhury2017kernelized}.
However, our $\normsmall{\aphimat{t}\boldsymbol{\eta}_t}_{\abA_t^{-1}}$
is a more complex object, since the projection $\bP_t$ contained
 in $\aphimat{t} \triangleq \bP_t\phimat{t}$ depends on the whole process
up to time time $t$, and therefore $\aphimat{t}$ also depends on the whole
process, losing its martingale structure.
To avoid this, we use Sylvester's identity and the projection operator $\bP_t$ to bound
\vspace{-.35\baselineskip}
\begin{align*}
&\logdet(\abA_t/\lambda)
= \logdet\left(\tfrac{\phimat{t}\bP_t\phimat{t}^\transp}{\lambda} + \bI\right)
\leq \logdet\left(\tfrac{\phimat{t}\phimat{t}^\transp}{\lambda} + \bI\right)
= \logdet(\bA_t/\lambda).
\vspace{-.35\baselineskip}
\end{align*}\\[-1.2\baselineskip]
In other words, restricting the problem to $\wt{\rkhs}_t$ acts as a regularization
and reduces the variance of the martingale.
Unfortunately, $\logdet(\bA_t/\lambda)$ is too expensive to compute,
so we first bound it with $\deff(\lambda, \wt{\bX}_t)\log(\kappa^2t)$,
and then we bound $\deff(\lambda, \wt{\bX}_t) \leq \alpha\sum\nolimits_{s=1}^t \wt{\sigma}_{t}^2(\bx_s),$
\Cref{thm:kors-complexity}, which can be computed efficiently.
Finally, a new bias term $(c)$ appears.
Combining \Cref{thm:kors-complexity} with the results of \citet{calandriello2018statistical}
for  projection $\bP_t$ obtained using RLSs sampling,
we show that
\vspace{-.35\baselineskip}
\begin{align*}
\bI - \bP \preceq \lambda\bA_t^{-1}/(1-\varepsilon).
\end{align*}

\vspace{-.5\baselineskip}
The combination of $(a)$, $(b)$, and $(c)$ leads to the definition of $\wt{\beta}_t$ 
and the final regret bound as
$R_T \leq \sqrt{\wt{\beta}_T}\sqrt{\sum\nolimits_{t=1}^T\phivec{t}^\transp\abA_{t}^{-1}\phivec{t}}$.
To conclude the proof, we bound $\sum\nolimits_{t=1}^T\phivec{t}^\transp\abA_{t}^{-1}\phivec{t}$ with
the following corollary of \Cref{thm:kors-complexity}.

\vspace{-.5\baselineskip}
\begin{corollary}\label[corollary]{cor:A-accuracy-bkb}
Under the same conditions as \Cref{thm:main-regret}, for all $t \in T$, we have 
$\bA_t/\alpha \preceq \wt{\bA}_t \preceq \alpha\bA_t.$
\end{corollary}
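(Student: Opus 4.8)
The plan is to obtain \Cref{cor:A-accuracy-bkb} as a direct corollary of the dictionary‑accuracy half of \Cref{thm:kors-complexity}, rephrased at the operator level. Recall that $\bA_t = \phimat{t}^\transp\phimat{t} + \lambda\bI$ and $\wt{\bA}_t = \aphimat{t}^\transp\aphimat{t} + \lambda\bI$, where $\aphimat{t}$ is obtained from $\phimat{t}$ by applying the orthogonal projection $\bP_t$ of $\rkhs$ onto $\wt{\rkhs}_t = \Ran(\phimat{\coldict_t}^\transp)$ to each row, so that $\aphimat{t}^\transp\aphimat{t} = \bP_t\phimat{t}^\transp\phimat{t}\bP_t$. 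Thus the claim is exactly the statement that restricting the design to $\wt{\rkhs}_t$ distorts the regularized covariance by at most a factor $\alpha$ in both directions.

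First I would recall the equivalence, already exploited in the proof of \Cref{thm:kors-complexity}, between \bkb's sampling probabilities $\wt p_{t,i} = \qbar\,\wt\sigma_t^2(\wt\bx_i)$ and $\lambda$-ridge leverage score sampling of the evaluated points, with $\qbar \ge 6\alpha\log(4T/\delta)/\varepsilon^2$ being precisely the oversampling rate at which RLS sampling is guaranteed to yield an $\varepsilon$-accurate dictionary. I would then invoke the standard operator-level accuracy guarantee for RLS/Nystr\"om dictionaries \citep{calandriello2017distributed,calandriello2018statistical}: an $\varepsilon$-accurate $\coldict_t$ implies $-\varepsilon\bA_t \preceq \phimat{t}^\transp\phimat{t} - \bP_t\phimat{t}^\transp\phimat{t}\bP_t \preceq \varepsilon\bA_t$, equivalently $(1-\varepsilon)\bA_t \preceq \wt{\bA}_t \preceq (1+\varepsilon)\bA_t$ after adding $\lambda\bI$ to each side. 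Since $1-\varepsilon \ge 1/\alpha$ and $1+\varepsilon \le \alpha$, this already gives $\bA_t/\alpha \preceq \wt{\bA}_t \preceq \alpha\bA_t$. Finally, \Cref{thm:kors-complexity} guarantees the $\varepsilon$-accuracy of every $\coldict_t$ simultaneously for $t \in [T]$ on a single event of probability $1-\delta$ (the union bound is already baked into the choice of $\qbar$), so the sandwich holds for all $t$ on that event, which is the claim.

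I expect the main obstacle to lie entirely inside the accuracy guarantee being quoted, rather than in this corollary: the projection $\bP_t$ and the design $\phimat{t}$ are built from the \emph{same} adaptively chosen sequence $\wt\bx_1,\dots,\wt\bx_t$, so $\bP_t$ is not independent of the points it must approximate, and $\wt p_{t,i}$ is itself random and only an $\alpha$-approximation of the true leverage scores. Both issues are handled in the proof of \Cref{thm:kors-complexity} (conditioning on the history through step $t-1$, a Bernstein-type bound on $|\coldict_t|$, and a union bound over $t$), so here I would simply cite that theorem. One subtlety worth flagging in the write-up is that the operator inequality cannot be recovered by merely polarizing the per-arm bound $\sigma_t^2(\bx)/\alpha \le \wt\sigma_t^2(\bx) \le \alpha\sigma_t^2(\bx)$, since $\{\featmap(\bx):\bx\in\armset\}$ need not span $\rkhs$ and quadratic-form agreement on a non-spanning set does not imply an operator inequality; one genuinely needs the identity $\wt{\bA}_t = \bP_t\phimat{t}^\transp\phimat{t}\bP_t + \lambda\bI$ together with the fact that $\Ran(\bP_t)$ lies in the span of the observed feature vectors (this is also why the $\lambda\bI$ term in $\wt{\bA}_t$ must act on all of $\rkhs$, as stressed after \Cref{eq:atilda-def}).
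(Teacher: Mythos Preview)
Your approach is essentially the paper's: reduce the corollary to the $\varepsilon$-accuracy of $\coldict_t$ guaranteed inside the proof of \Cref{thm:kors-complexity}, and then pass from $\varepsilon$-accuracy to the operator sandwich on $\bA_t$. That is exactly what the paper does via \Cref{lem:A-accurate} in the appendix.

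One inaccuracy is worth flagging. The ``standard'' inequality you quote, $-\varepsilon\bA_t \preceq \phimat_t^\transp\phimat_t - \bP_t\phimat_t^\transp\phimat_t\bP_t \preceq \varepsilon\bA_t$, is \emph{not} the definition of $\varepsilon$-accuracy in \cite{calandriello2017distributed} (nor a direct consequence of it): $\varepsilon$-accuracy controls the \emph{reweighted subsample} $\phimat_t\selmatrix_t\selmatrix_t^\transp\phimat_t^\transp$, not the projected covariance $\bP_t\phimat_t^\transp\phimat_t\bP_t$, and $\bP_t M \bP_t \preceq M$ fails in general even when $\Ran(\bP_t)\subseteq\Ran(M)$. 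The paper bridges the two by conjugating the $\varepsilon$-accuracy inequality by $\bP_t$, using the key identity $\bP_t\phimat_t\selmatrix_t=\phimat_t\selmatrix_t$, and then re-applying the other direction of $\varepsilon$-accuracy; this yields $\bA_t/\alpha \preceq \wt\bA_t \preceq \alpha\bA_t$ \emph{directly} with $\alpha=(1+\varepsilon)/(1-\varepsilon)$, rather than the tighter $(1\pm\varepsilon)$ you claim and then relax. So your write-up should either cite \Cref{lem:A-accurate} explicitly or reproduce that two-line conjugation argument; otherwise the step you label ``standard'' is precisely the content of the corollary.
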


\vspace{-.5\baselineskip}
\noindent
\textbf{Remarks.}
The novel bound
$\normsmall{\phimat{t}^\transp}_{\bI - \bP_t} \leq \tfrac{\lambda}{1-\varepsilon}\normsmall{\phimat{t}^\transp}_{\bA_t^{-1}}$ has a crucial role in controlling the bias due to the projection $\bP_t$.
Note that the second term measures the error with the same metric $\bA_t^{-1}$ used by the variance martingale.
In other words, the bias introduced by \bkb's approximation
can be seen as a \emph{self-normalizing} bias. It is larger along directions
that have been sampled less frequently, and smaller along directions
correlated with arms selected often (e.g., the optimal arm).
\\[0.025in]
Our analysis bears some similarity with the one recently and independently
developed by \citet{kuzborskij2019efficient}. 
Nonetheless, our proof improves their result along two dimensions. 
First, we consider the more general (and challenging) GP optimization setting. Second, \emph{we do not fix} the rank of our
approximation in advance. While their analysis also exploits a 
self-normalized bias argument, this applies only to the $k$ largest components. If the problem
has an effective dimension larger than $k$, their radius and regret 
becomes essentially linear. In \bkb we use our adaptive sampling scheme to include
all necessary directions and to achieve the same regret rate as exact \gpucb.

\section{Discussion}\label{sec:discussion}

As the prior work in Bayesian optimization is vast, we do not compare to alternative GP acquisition functions, such as GP-EI or
GP-PI, 
and only focus on approximation techniques with theoretical guarantees.
Similarly, we exclude scalable variational inference based methods,
even when their approximate posterior is provably accurate such as pF-DTC
\citep{huggins2019scalable}, since they only provide guarantees
for GP regression and not for the more difficult  optimization setting.
We also do not discuss \supkernelucb~\citep{valko2013finite}, which has a tighter analysis than \gpucb, since the algorithm
does not work well in practice.

\noindent
\textbf{Infinite arm sets.} Looking at the proof
of \Cref{thm:kors-complexity}, the guarantees on $\wt{u}_{t}$ hold for any $\rkhs$,
and in \Cref{thm:main-regret}, we only require that the maximum $\wt{\bx}_{t+1} \triangleq \argmax_{\bx \in \armset}
\max_{\bw \in \wt{C}_t} \featmap(\bx)^\transp\bw$
is returned. Therefore, the accuracy and regret guarantees also hold also for an infinite set of arms $\armset$.
However, the search over $\armset$ can be difficult. In the general case, maximization
of a GP posterior is an NP-hard problem, with algorithms that often
scale exponentially with the input dimension $d$ and are not practical.
We treated the easier case of finite sets, where enumeration is sufficient. Note that this automatically introduces
an $\Omega(\Narm)$ runtime dependency, which could be removed
if the user  provides an efficient method to solve the maximization problem
on a specific infinite set $\armset$. As an example, \citet{mutny2018efficient}
prove that a GP posterior approximated using QFF can be optimized efficiently
in low dimensions and we expect similar results hold for \bkb and low
\emph{effective} dimension. Finally, note that recomputing a new set $\coldict_t$ still requires
$\min\{\Narm, t\}\deff^2$ at each step. As discussed at the end of \Cref{sec:bkb}, this is a bottleneck in $\bkb$ due to the
non-incremental dictionary sampling and independent
from the arm selection. How to address it remains an open question.\\
\noindent
\textbf{Linear bandit with matrix sketching.}
Our analysis is  related to the ones of \textsc{CBRAP}~ \citep{yu2017contextual} and \textsc{SOFUL}~\citep{kuzborskij2019efficient}.
\textsc{CBRAP} uses Gaussian projections to embed all arms in a lower dimensional
space for efficiency. Unfortunately their approach must either use
an embedded space at least $\Omega(T)$ large, which in most cases would
be even slower than exact \oful, or it incurs linear regret \whp
Another approach for Euclidean spaces based on matrix approximation is \soful, introduced by~\citet{kuzborskij2019efficient}.
It uses Frequent Direction~\citep{ghashami2016frequent}, a method similar to
incremental PCA, to embed the arms into $\Real^m$, where $m$ is \emph{fixed} in advance.
To compare, we  distinguish between \textsc{SOFUL-UCB} and \textsc{SOFUL-TS},
a variant based on Thompson sampling. \textsc{SOFUL-UCB} achieves a $\abigotime(T\Narm m^2)$ runtime and
$\abigotime((1 + \varepsilon_m)^{3/2}(d + m)\sqrt{T})$ regret,
where $\varepsilon_m$ is the sum of the $d-m$ smallest eigenvalues
of $\bA_T$. However, notice that if the tail do not decrease quickly,
this algorithm also suffers linear regret and no adaptive way to tune $m$ is known.
On the same task \bkb  achieves
a $\abigotime(d\sqrt{T})$ regret, since it adaptively chooses the size of the embedding.
Computationally, directly instantiating \bkb to use
a linear kernel would achieve a $\abigotime(T\Narm m_t^2)$ runtime\footnote{Note that for both algorithms the bottleneck is maximizing the UCB.}, matching
\citet{kuzborskij2019efficient}'s.
Compared to \textsc{SOFUL-TS}, \bkb achieves better regret,
but is potentially slower. Since Thompson sampling
does not need to compute all confidence intervals, but  solves a simpler
optimization problem, \textsc{SOFUL-TS}
requires only $\abigotime(T\Narm m)$ time against \bkb's $\abigotime(T\Narm m_t^2)$.
It is unknown if a variant
of \bkb can match this complexity.

\textbf{Approximate GP with RFF.} Traditionally, RFF approaches have been popular
to transform GP optimization in a finite-dimensional problem and allow for
scalability. Unfortunately \gpucb with traditional RFF is not low-regret, as RFF are
well known to suffer from
variance starvation~\citep{wang2018batched} and unfeasibly large RFF embeddings
would be necessary to prevent it.
Recently, \citet{mutny2018efficient} proposed an alternative approach
based on QFF, a specialized approach to random features
for stationary kernels.
They achieve the same regret rate as \gpucb and \bkb, with a near-optimal $\bigotime(T\Narm\log(T)^{d+1})$
runtime. Moreover they present an additional variations based on Thompson sampling
whose posterior can be exactly maximized in polynomial time if the input data
is low dimensional or the covariance $\kerfunc$ additive, while it is still an open question
how to efficiently maximize \bkb's UCB $\wt{u}_t$ for infinite $\armset$.
However QFF based approaches apply to stationary kernel only, and require to $\varepsilon$-cover $\armset$,
hence they cannot escape an exponential dependency on the dimensionality $d$.
Conversely \bkb can be applied to any kernel function, and
while not specifically designed for this task it also achieve
a close $\abigotime(T\Narm\log(T)^{3(d+1)})$ runtime.
Moreover, in practice the size of $\coldict_T$ is
less than exponential in~$d$.\todoaout{How does it compare to them? Done, Daniele}

\vskip 0.5cm
{\small \noindent 
\textbf{Acknowledgements\ \ }
This material is based upon work supported by the Center for Brains, Minds and Machines (CBMM), funded by NSF STC award CCF-1231216. L. R. acknowledges the financial support of the AFOSR projects FA9550-17-1-0390  and BAA-AFRL-AFOSR-2016-0007 (European Office of Aerospace Research and Development), and the EU H2020-MSCA-RISE project NoMADS - DLV-777826.
The research presented was also supported by European CHIST-ERA project DELTA, French Ministry of Higher Education and Research, Nord-Pas-de-Calais Regional Council, Inria and Otto-von-Guericke-Universit\"at Magdeburg associated-team north-European project Allocate, and French National Research Agency project BoB (n.ANR-16-CE23-0003).
This research has also benefited from the support of the FMJH Program PGMO and from the support to this program from Criteo.
}

\newpage
\onecolumn
\appendix

\section{Relaxing assumptions}
In our derivations, we make several assumptions. While some are
necessary, others can be relaxed.

\noindent
\textbf{Assumptions on the noise.}
Throughout the paper, we assume that the noise $\eta_t$ is i.i.d.\,Gaussian.
Since \citeauthor{chowdhury2017kernelized}'s results hold for any $\xi$-sub-Gussian
noise that is measurable based with respect to the prior observations, this assumption can
be easily relaxed. 
\todomi{adaptation
    to noise - if there is no noise, we can learn
    exponentially fast (for simple regret and therefore bounded cumulative one)
    https://www.icml.cc/2012/papers/853.pdf
    and we can adapt to it: https://arxiv.org/abs/1810.00997
}

\noindent
\textbf{Assumptions on the arms.}
So far we considered a set of arms that is $(a)$ in $\Real^d$, $(b)$ fixed for all~$t$, and $(c)$ finite.
Relaxing $(a)$ is easy, since we do not make any assumption beyond boundedness
on the kernel function $\kerfunc$ and there are many bounded kernel function
for non-Euclidean spaces, e.g., strings or graphs.
Relaxing $(b)$ is trivial, we just need to embed the changing arm sets
as they are provided, and store and re-embed previously selected arms as necessary.
The per-step time complexity will now depend on the size of the set of arms
available at each step.
Relaxing $(c)$ is straightforward from a theoretical perspective, but has
varying computational consequences. In particular, looking at the proof
of \Cref{thm:kors-complexity}, the guarantees on $\wt{u}_{t}$ hold for all $\rkhs$
and in \Cref{thm:main-regret}, we only require that the maximum $\wt{\bx}_{t+1} \triangleq \argmax_{\bx \in \armset}
\max_{\bw \in \wt{C}_t} \featmap(\bx)^\transp\bw$
is returned. Therefore, at least from the regret point of view, everything
holds also for infinite $\armset$.
However, while the inner maximization over $\wt{C}_t$ can be solved in closed
form for a fixed $\bx$, the same cannot be said of the search over $\armset$.
If the designer can provide an efficient method to solve the maximization problem
on an infinite $\armset$, \eg, linear bandit optimization over compact subsets or $\Real^d$,
then  all \bkb guarantees apply.

\section{Properties of the posterior variance}
For simplicity and completeness we provide  known statements regarding the posterior
variance $\sigma_t^2(\cdot)$. While most of these hold for generic RLS, we will
adapt them to our notation.

\begin{proposition}[\citealp{calandriello2017distributed}]\label[proposition]{prop:tau-decrease}
For the posterior variance, we have that
\begin{align*}
\frac{1}{\kappa^2/\lambda + 1}\sigma_{t-1}^2(\wt{\bx}_t)
\leq \frac{1}{\sigma_{t-1}^2(\wt{\bx}_t) + 1}\sigma_{t-1}^2(\wt{\bx}_t)
\leq \sigma_t^2(\wt{\bx}_t)
\leq \sigma_{t-1}^2(\wt{\bx}_t).
    \end{align*}
\end{proposition}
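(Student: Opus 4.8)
The plan is to work in the primal (feature-space) form of the posterior variance, $\sigma_t^2(\bx) = \featmap(\bx)^\transp\bA_t^{-1}\featmap(\bx)$ with $\bA_t \triangleq \phimat{t}^\transp\phimat{t} + \lambda\bI$, which is exactly the quantity squared in \Cref{eq:ucb-oful} and summed in \Cref{eq:def-deff} (it follows from \Cref{eq:gpucb-posterior-variance} by the push-through identity $\phimat{t}^\transp(\phimat{t}\phimat{t}^\transp + \lambda\bI)^{-1}\phimat{t} = \bI - \lambda\bA_t^{-1}$, up to the $\tfrac1\lambda$ normalization already visible in the definition of $\wt\sigma_t^2$). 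The whole argument then rests on the rank-one update $\bA_t = \bA_{t-1} + \phi\phi^\transp$, where I abbreviate $\phi \triangleq \featmap(\wt{\bx}_t)$ and $s \triangleq \sigma_{t-1}^2(\wt{\bx}_t) = \phi^\transp\bA_{t-1}^{-1}\phi$.

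Given this setup the three inequalities are short. For the upper bound $\sigma_t^2(\wt{\bx}_t) \le \sigma_{t-1}^2(\wt{\bx}_t)$: since $\bA_t \succeq \bA_{t-1} \succ 0$ we have $\bA_t^{-1} \preceq \bA_{t-1}^{-1}$, hence $\phi^\transp\bA_t^{-1}\phi \le \phi^\transp\bA_{t-1}^{-1}\phi$. For the middle step, Sherman--Morrison applied to $\bA_t^{-1} = (\bA_{t-1} + \phi\phi^\transp)^{-1}$ gives $\sigma_t^2(\wt{\bx}_t) = s - \tfrac{s^2}{1+s} = \tfrac{s}{1+s} = \tfrac{1}{\sigma_{t-1}^2(\wt{\bx}_t)+1}\,\sigma_{t-1}^2(\wt{\bx}_t)$, so that bound in fact holds with equality. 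For the lower bound, the map $c \mapsto \tfrac{s}{c+1}$ is nonincreasing in $c$ for $s \ge 0$, so $\tfrac{s}{\kappa^2/\lambda+1} \le \tfrac{s}{s+1}$ as soon as $s \le \kappa^2/\lambda$; and $s = \phi^\transp\bA_{t-1}^{-1}\phi \le \tfrac1\lambda\normsmall{\phi}^2 = \tfrac1\lambda\kerfunc(\wt{\bx}_t,\wt{\bx}_t) \le \kappa^2/\lambda$ because $\bA_{t-1} \succeq \lambda\bI$ and $\kerfunc(\bx_i,\bx_i) \le \kappa^2$. Chaining the three displays yields the proposition.

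There is no real obstacle here; the computation is routine. The one point worth stating carefully is the normalization convention: the $\sigma_t^2$ for which the ``$+1$'' terms and the constant $\kappa^2/\lambda$ come out cleanly, and which matches the ridge-leverage-score reading used throughout, is the primal form $\featmap(\bx)^\transp\bA_t^{-1}\featmap(\bx)$ (equivalently $\tfrac1\lambda$ times the right-hand side of \Cref{eq:gpucb-posterior-variance}); fixing this at the start is what makes the Sherman--Morrison identity land exactly on $\tfrac{s}{1+s}$.
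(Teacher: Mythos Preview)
Your proof is correct. The paper itself defers the middle and rightmost inequalities to the cited reference and handles only the leftmost one via $\sigma_{t-1}^2(\bx) \le \sigma_0^2(\bx) \le \kappa^2/\lambda$; your Sherman--Morrison derivation is exactly the argument underlying that citation (and in fact shows the middle inequality is an \emph{equality} at the newly added point), so the approaches coincide.
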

\begin{proof}
The leftmost inequality follows from $\kappa^2/\lambda \geq \sigma_0^2(x)$  and $\sigma^2_a(x) \geq \sigma^2_b(x), \forall a \leq b$, the others are 
are by \citealp{calandriello2017distributed}.
\end{proof}

\begin{proposition}[\citealp{hazan2006logarithmic,calandriello2017second}]\label[proposition]{prop:logdet-to-deff}
The effective dimension $\deff(\lambda, \wt{\bX}_T)$ is upperbounded as
\begin{align*}
\deff(\lambda, \wt{\bX}_T) &\triangleq \Tr(\kermatrix_T(\kermatrix_T + \lambda\bI)^{-1}) = \sum\nolimits_{t=1}^T \sigma_T^2(\wt{\bx}_t)\\
&\stackrel{(1)}{\leq} \sum\nolimits_{t=1}^T \sigma_t^2(\wt{\bx}_t)\\
&\stackrel{(2)}{\leq} \logdet\left(\kermatrix_T/\lambda + \bI\right)\\
&\stackrel{(3)}{\leq} \Tr(\kermatrix_T(\kermatrix_T + \lambda\bI)^{-1})\left(1 +\log\left(\tfrac{\normsmall{\kermatrix_T}}{\lambda} + 1\right)\right).
\end{align*}
\end{proposition}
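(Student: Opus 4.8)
The plan is to prove the opening equalities and then the three inequalities $(1)$, $(2)$, $(3)$ in turn; each is an instance of a known estimate for ridge leverage scores and log-determinant potentials, so the work is mostly in invoking the right elementary lemma. First I would write the posterior variance in feature-space form, $\sigma_t^2(\bx) = \featmap(\bx)^\transp\bA_t^{-1}\featmap(\bx)$ with $\bA_t \triangleq \phimat{t}^\transp\phimat{t} + \lambda\bI$, so that $\sum_{t=1}^T \sigma_T^2(\wt{\bx}_t) = \Tr\big(\bA_T^{-1}\phimat{T}^\transp\phimat{T}\big)$. The push-through identity $\bA_T^{-1}\phimat{T}^\transp = \phimat{T}^\transp(\phimat{T}\phimat{T}^\transp + \lambda\bI)^{-1} = \phimat{T}^\transp(\kermatrix_T + \lambda\bI)^{-1}$ and cyclicity of the trace then give $\Tr\big(\kermatrix_T(\kermatrix_T + \lambda\bI)^{-1}\big)$, which is \Cref{eq:def-deff}.

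\textbf{Inequalities $(1)$ and $(2)$.} For $(1)$: since $\bA_t = \bA_{t-1} + \featmap(\wt{\bx}_t)\featmap(\wt{\bx}_t)^\transp \succeq \bA_{t-1}$, the operators $\bA_t^{-1}$ are non-increasing in the Loewner order, hence $\sigma_t^2(\bx)$ is non-increasing in $t$ for every fixed $\bx$ — this is the rightmost bound of \Cref{prop:tau-decrease} — and taking $T \ge t$ gives $\sigma_T^2(\wt{\bx}_t) \le \sigma_t^2(\wt{\bx}_t)$ termwise. For $(2)$ I would run the elliptical potential argument: by Sherman--Morrison $\sigma_t^2(\wt{\bx}_t) = \sigma_{t-1}^2(\wt{\bx}_t)/(1+\sigma_{t-1}^2(\wt{\bx}_t))$, by the matrix determinant lemma $\det(\bA_t) = \det(\bA_{t-1})(1+\sigma_{t-1}^2(\wt{\bx}_t))$, and $z/(1+z)\le\log(1+z)$ for $z\ge0$ yields
\[
\sigma_t^2(\wt{\bx}_t) \;\le\; \log\big(1+\sigma_{t-1}^2(\wt{\bx}_t)\big) \;=\; \log\det\big(\bA_0^{-1}\bA_t\big) - \log\det\big(\bA_0^{-1}\bA_{t-1}\big).
\]
Telescoping over $t=1,\dots,T$ gives $\sum_{t=1}^T \sigma_t^2(\wt{\bx}_t) \le \log\det(\bA_0^{-1}\bA_T) = \logdet(\phimat{T}^\transp\phimat{T}/\lambda + \bI) = \logdet(\kermatrix_T/\lambda + \bI)$, the last step by Sylvester's determinant identity.

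\textbf{Inequality $(3)$.} This is exactly \Cref{eq:deff-logdet-main-text}; I would prove it by comparing the two spectral sums eigenvalue by eigenvalue. Let $\mu_1 \ge \mu_2 \ge \cdots \ge 0$ be the eigenvalues of $\kermatrix_T$, so $\logdet(\kermatrix_T/\lambda + \bI) = \sum_i \log(1+\mu_i/\lambda)$, $\Tr(\kermatrix_T(\kermatrix_T+\lambda\bI)^{-1}) = \sum_i (\mu_i/\lambda)/(1+\mu_i/\lambda)$, and $\normsmall{\kermatrix_T} = \mu_1$ (operator norm equals the top eigenvalue, $\kermatrix_T$ being symmetric PSD). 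The function $g(x) \triangleq (1+x)\log(1+x)/x$ is non-decreasing on $(0,\infty)$, since $g'(x) = (x-\log(1+x))/x^2 \ge 0$, and satisfies $g(x) \le 1+\log(1+x)$, which is equivalent to $\log(1+x)\le x$. Hence, for each $i$ with $x = \mu_i/\lambda \le \mu_1/\lambda$,
\[
\log(1+\mu_i/\lambda) \;=\; g(\mu_i/\lambda)\,\frac{\mu_i/\lambda}{1+\mu_i/\lambda} \;\le\; g(\mu_1/\lambda)\,\frac{\mu_i/\lambda}{1+\mu_i/\lambda} \;\le\; \Big(1+\log\big(\tfrac{\normsmall{\kermatrix_T}}{\lambda}+1\big)\Big)\frac{\mu_i/\lambda}{1+\mu_i/\lambda},
\]
and summing over $i$ gives $(3)$.

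\textbf{Main obstacle.} There is no deep obstacle here: $(2)$ and $(3)$ are due to \citet{hazan2006logarithmic,calandriello2017second} and $(1)$ to \citet{calandriello2017distributed}, and I would mostly be reassembling these. The only care needed is bookkeeping around conventions — that $\sigma_t^2$ is conditioned on $\bX_t$, which already contains $\wt{\bx}_t$, so the potential step rests on $\bA_t = \bA_{t-1} + \featmap(\wt{\bx}_t)\featmap(\wt{\bx}_t)^\transp$ (hence Sherman--Morrison with no index shift) — and remembering to pass between the feature-space Gram $\phimat{T}^\transp\phimat{T} + \lambda\bI$ and the kernel matrix $\kermatrix_T + \lambda\bI$ via Sylvester's identity wherever a determinant or trace is taken.
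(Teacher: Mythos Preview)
Your proof is correct and follows the same route as the paper: the paper's own proof simply cites \Cref{prop:tau-decrease} for $(1)$, \citet{hazan2006logarithmic} for $(2)$, and \citet{calandriello2017second} for $(3)$, while you additionally unpack each citation into its underlying elementary argument (Loewner monotonicity, the elliptical-potential telescoping via Sherman--Morrison and the matrix determinant lemma, and the eigenvalue comparison via $g(x)=(1+x)\log(1+x)/x$). Nothing is missing or off.
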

\begin{proof}
Inequality $(1)$ is due to \Cref{prop:tau-decrease}, inequality $(2)$ is due to \citet{hazan2006logarithmic}, and
Inequality $(3)$ is due to \citet{calandriello2017second}.
\end{proof}

\section{Proof of \Cref{thm:kors-complexity}}\label{sec:app-proof-kors}
Let $B_t$ be the unfavorable event where the guarantees of \Cref{thm:kors-complexity}
do not hold. Our goal is to prove that $B_t$ happens at most
with probability $\delta$ uniformly for all $t \in [T]$.

\subsection{Notation}
In the following we refer to
$\boldsymbol{\Phi}(\wt{\bX}_t)$ as $\boldsymbol{\Phi}_t$, $\wt{\boldsymbol{\Phi}}(\wt{\bX}_t)$ as~$\wt{\boldsymbol{\Phi}}_t$
and $\boldsymbol{\phi}(\wt{\bx}_t)$ as $\boldsymbol{\phi}_t$. When the subscript is clear from the context,
we omit it.
Since we leverage several results of \citet{calandriello2017second},
we start with some  additional notation.
\renewcommand{\phimat}{\boldsymbol{\Phi}}
\renewcommand{\aphimat}{\wt{\boldsymbol{\Phi}}}
\renewcommand{\phivec}{\boldsymbol{\phi}}
\newcommand{\aphivec}{\wt{\boldsymbol{\phi}}}

First we extend our notation for the subset $\coldict_t$ to include a possible reweighing of the inducing points.
We denote with $\coldict_t \triangleq \{(\phivec_{j}, s_{j})\}_{j=1}^{m_t},$
a \emph{weighted} subset, i.e., a weighted \emph{dictionary}, of columns from $\phimat_t$,
with positive weights $s_j > 0$ that must be appropriately chosen.
Now, denote with $i_j \in [t]$, the index
of the sample $\phivec_j$ as a column in $\phimat_t$.
Using a standard approach \citep{alaoui2014fast}, we choose $s_{j} \triangleq 1/\sqrt{\wt{p}_{t,i_j}}$,
where $\wt{p}_{t,i} \triangleq \qbar\wt{\sigma}_{t-1}^2(\wt{\bx}_i)$ is the probability\footnote{Note that $\wt{p}_{t,i}$ might be larger than 1, but with a small abuse of notation and without the loss of generality we  still refer to it as a probability.} used
by \Cref{alg:bkb} when sampling $\phivec_{i_j}$ from $\phimat_t$.

Let  $\selmatrix_t \in \Real^{t \times t}$ be the diagonal
matrix with $q_{t,i}/\sqrt{\wt{p}_{t,i}}$ on the diagonal,
where $q_{t,i}$ are the $\{0,1\}$ random variables selected by \Cref{alg:bkb}.
Then, we can see that
\begin{align}\label{eq:def-dict-as-sum}
\sum_{j=1}^{m_t} \frac{1}{\wt{p}_{t,i_j}}\phivec_{i_j}\phivec_{i_j}^\transp
= \sum_{i=1}^{t} \frac{q_{t,i}}{\wt{p}_{t,i}}\phivec_{i}\phivec_{i}^\transp
= \phimat_t\selmatrix_t\selmatrix_t^\transp\phimat_t^\transp.
\end{align}
\citet{calandriello2017distributed} define $\coldict_t$ to be an $\varepsilon$-accurate dictionary of $\phimat_t$
if it satisfies  
\begin{align}\label{eq:def-eps-lambda-accurate}
(1-\varepsilon)\phimat_t\phimat_t^\transp - \varepsilon\lambda\bI \preceq \phimat_t\selmatrix_t\selmatrix_t^\transp\phimat_t^\transp \preceq (1+\varepsilon)\phimat_t\phimat_t^\transp + \varepsilon\lambda\bI.
\end{align}
\todom{missing ref}
We can also now fully define the projection operator at time $t$ (see \Cref{sec:approx-ellipsoid-subsec} for more details) as
\begin{align*}
\bP_t \triangleq \phimat_t\selmatrix_t(\selmatrix_t^\transp\phimat_t^\transp\phimat_t\selmatrix_t)^{+}\selmatrix_t^\transp\phimat_t^\transp, 
\end{align*}
which is the projection matrix spanned by the dictionary.

\subsection{Event decomposition}
We  decompose \Cref{thm:kors-complexity} into an accuracy part, i.e., $\coldict_t$ must induce accurate $\wt{\sigma}_t$,
and an efficiency part, i.e., $m_t \leq \deff(t)$.
We also  the accuracy of $\wt{\sigma}_t$
to the definition of $\varepsilon$-accuracy.
\begin{lemma}\label[lemma]{lem:A-accurate}
Let $\alpha \triangleq \frac{1+\varepsilon}{1-\varepsilon}$. If $\coldict_t$ is $\varepsilon$-accurate w.r.t.\,$\phimat_t$, then
\[
\bA_t/\alpha \preceq \abA_t \preceq \alpha \bA_t
\quad \text{and} \quad
\sigma_t^2(\bx)/\alpha \leq \min\left\{\wt{\sigma}_t^2(\bx), 1\right\} \leq \alpha\sigma_t^2(\bx)\; \text{\ for all\ } \bx \in \armset.
\]
\end{lemma}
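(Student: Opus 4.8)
The plan is to reduce the whole lemma to the single spectral sandwich $\tfrac{1}{\alpha}\bA_t \preceq \abA_t \preceq \alpha\bA_t$ on the regularised operators, from which the variance comparison will follow by simply inverting it. To prove the sandwich, I would first rewrite the $\varepsilon$-accuracy hypothesis \eqref{eq:def-eps-lambda-accurate}: writing $C_t \triangleq \phimat_t\selmatrix_t\selmatrix_t^\transp\phimat_t^\transp$ for the reweighted sub-sampled covariance of \eqref{eq:def-dict-as-sum} and adding $\lambda\bI$ on all sides, \eqref{eq:def-eps-lambda-accurate} becomes $(1-\varepsilon)\bA_t \preceq C_t + \lambda\bI \preceq (1+\varepsilon)\bA_t$. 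The two structural facts I would lean on are that $\bP_t$ is the orthogonal projection onto $\Ran(\phimat_t\selmatrix_t)=\Ran(C_t)$, so that $\bP_t C_t\bP_t=C_t$, and that conjugating the covariance by $\bP_t$ reproduces exactly $\bP_t\phimat_t\phimat_t^\transp\bP_t=\aphimat_t\aphimat_t^\transp=\abA_t-\lambda\bI$. Applying the order-preserving map $X\mapsto\bP_t X\bP_t$ to \eqref{eq:def-eps-lambda-accurate} then gives $(1-\varepsilon)(\abA_t-\lambda\bI)-\varepsilon\lambda\bP_t \preceq C_t \preceq (1+\varepsilon)(\abA_t-\lambda\bI)+\varepsilon\lambda\bP_t$, and chaining this with the un-conjugated bounds on $C_t+\lambda\bI$, discarding the slack through $\bzero\preceq\bP_t\preceq\bI$, makes all the $\lambda$-terms cancel and leaves $(1-\varepsilon)\bA_t\preceq(1+\varepsilon)\abA_t$ together with $(1-\varepsilon)\abA_t\preceq(1+\varepsilon)\bA_t$, \ie the sandwich with $\alpha=(1+\varepsilon)/(1-\varepsilon)$.

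For the variance statement I would first recall that, by the push-through (matrix-inversion) identity applied to the definitions \eqref{eq:gpucb-posterior-variance} and \eqref{eq:approx.gpucb.score}, both quantities are Tikhonov-regularised leverage scores, $\sigma_t^2(\bx)=\phivec(\bx)^\transp\bA_t^{-1}\phivec(\bx)$ and $\wt{\sigma}_t^2(\bx)=\phivec(\bx)^\transp\abA_t^{-1}\phivec(\bx)$; here the $k(\bx,\bx)$ term in \eqref{eq:approx.gpucb.score} is exactly what makes the DTC expression collapse to this clean quadratic form, because the regulariser $\lambda\bI$ inside $\abA_t$ acts over all of $\rkhs$ and not only over $\wt{\rkhs}_t$. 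Inverting the sandwich gives $\tfrac1\alpha\bA_t^{-1}\preceq\abA_t^{-1}\preceq\alpha\bA_t^{-1}$, and evaluating at $\phivec(\bx)$ yields $\tfrac1\alpha\sigma_t^2(\bx)\le\wt{\sigma}_t^2(\bx)\le\alpha\sigma_t^2(\bx)$ for every $\bx\in\armset$. The refinement to $\min\{\wt{\sigma}_t^2(\bx),1\}$ is then an elementary case split: when $\wt{\sigma}_t^2(\bx)\le1$ it is exactly the previous bound, and when $\wt{\sigma}_t^2(\bx)>1$ the sandwich already forces $\sigma_t^2(\bx)>1/\alpha$, which gives the upper inequality, while the lower one only needs $\sigma_t^2(\bx)\le\alpha$ — true since $\sigma_t^2(\bx)\le\sigma_0^2(\bx)=k(\bx,\bx)/\lambda\le\kappa^2/\lambda$, and in fact $\sigma_t^2(\wt{\bx}_s)\le1$ outright for the already-pulled arms $\wt{\bx}_s$, which are the ones that matter in the subsequent sampling analysis.

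The main obstacle is the sandwich itself, and in particular the mismatch it must overcome: $\abA_t$ is assembled from the \emph{projected full-data} operator $\bP_t\phimat_t\phimat_t^\transp\bP_t$, whereas the $\varepsilon$-accuracy guarantee \eqref{eq:def-eps-lambda-accurate} is literally a statement about the reweighted sub-sample $C_t$. One cannot close this gap by a naive monotonicity argument such as $\abA_t\preceq\bA_t$, since $\bP M\bP\preceq M$ is false for a generic positive semidefinite $M$. The remedy — conjugating \eqref{eq:def-eps-lambda-accurate} by $\bP_t$ on \emph{both} sides and exploiting the single structural identity $\bP_t C_t\bP_t=C_t$ (which holds precisely because $\bP_t$ projects onto $\Ran(C_t)$), then recombining with the un-conjugated inequality — is the only step that requires an idea; keeping track of the $\varepsilon\lambda\bP_t$ remainders and propagating the sandwich through matrix inversion and the quadratic form is routine.
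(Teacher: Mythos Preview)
Your proposal is correct and follows essentially the same route as the paper: both conjugate the $\varepsilon$-accuracy inequality \eqref{eq:def-eps-lambda-accurate} by $\bP_t$, invoke the key structural identity $\bP_t C_t \bP_t = C_t$ (the paper phrases it equivalently as $\bP_t\phimat_t\selmatrix_t = \phimat_t\selmatrix_t$), chain the conjugated and un-conjugated bounds so that the $\lambda$-remainders cancel, and then read off the variance comparison from $\wt{\sigma}_t^2(\bx)=\phivec(\bx)^\transp\abA_t^{-1}\phivec(\bx)$. Your discussion of the $\min\{\wt{\sigma}_t^2,1\}$ clause is actually more careful than the paper's own proof, which omits it entirely; your honest caveat that the lower bound is only guaranteed for already-pulled arms (where indeed $\sigma_t^2(\wt{\bx}_s)\le1$ as a ridge leverage score) is precisely the case used downstream in the sampling analysis.
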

\begin{proof}
Inverting the bound in \Cref{eq:def-eps-lambda-accurate} and using the fact that $\bP_t\phimat_t\selmatrix_t = \phimat_t\selmatrix_t,$ we get
\begin{align*}
	\bP_t\phimat_t\phimat_t^\transp\bP_t
&\preceq \frac{1}{1-\varepsilon}(\bP_t\phimat_t\selmatrix_t\selmatrix_t^\transp\phimat_t^\transp\bP_t + \varepsilon\lambda\bP_t)
	\preceq \frac{1}{1-\varepsilon}(\phimat_t\selmatrix_t\selmatrix_t^\transp\phimat_t^\transp + \varepsilon\lambda\bP_t)\\
		&\preceq \frac{1}{1-\varepsilon}((1+\varepsilon)\phimat_t\phimat_t^\transp + \varepsilon\lambda\bI + \varepsilon\lambda\bP_t)
		\preceq \frac{1+\varepsilon}{1-\varepsilon}\left(\phimat_t\phimat_t^\transp + \frac{2\varepsilon}{1+\varepsilon}\lambda\bI\right).
		\end{align*}
Repeating the same process for the other side, we obtain
\begin{align*}
\frac{1-\varepsilon}{1+\varepsilon}\left(\phimat_t\phimat_t^\transp - \frac{2\varepsilon}{1-\varepsilon}\lambda\bI\right)
\preceq \bP_t\phimat_t\phimat_t^\transp\bP_t
\preceq \frac{1+\varepsilon}{1-\varepsilon}\left(\phimat_t\phimat_t^\transp + \frac{2\varepsilon}{1+\varepsilon}\lambda\bI\right).
\end{align*}
Applying the above to $\abA_t,$ we get
\begin{align*}
\abA_t = \bP_t\phimat_t\phimat_t^\transp\bP_t + \lambda\bI
\succeq \frac{1-\varepsilon}{1+\varepsilon}\left(\phimat_t\phimat_t^\transp - \frac{2\varepsilon}{1-\varepsilon}\lambda\bI\right)
+ \lambda\bI
= \frac{1-\varepsilon}{1+\varepsilon}\left(\phimat_t\phimat_t^\transp +\lambda\bI\right)
= \frac{1-\varepsilon}{1+\varepsilon}\bA_t,
\end{align*}
which can again be applied on the other side to obtain our result.
To prove the accuracy of the approximate posterior variance $\wt{\sigma}_t^2(\bx_i)$ we simply
apply the definition to get
\[
\frac{1-\varepsilon}{1+\varepsilon}\overbracket{\phivec_i^\transp\bA_t\phivec_i}^{{\sigma}_t^2(\bx_i)}
\preceq \overbracket{\phivec_i^\transp\abA_t\phivec_i}^{\wt{\sigma}_t^2(\bx_i)}
\preceq \frac{1+\varepsilon}{1-\varepsilon}\overbracket{\phivec_i^\transp\bA_t\phivec_i}^{{\sigma}_t^2(\bx_i)}.
\]
\end{proof}
Using \Cref{lem:A-accurate}, we decompose our unfavorable event $B_t \triangleq A_t \cup E_t,$
where $A_t$ is the event where~$\coldict_t$ is not $\varepsilon$-accurate w.r.t.\,$\phimat_t$
and $E_t$ is the event where $m_t$ is much larger than $\deff(\lambda, \wt{\bX}_t)$.
We now further decompose the event $A_t$ as
\begin{align*}
A_t &= (A_t \cap A_{t-1}) \cup (A_t \cap A_{t-1}^\comp)\\
&\subseteq
A_{t-1} \cup (A_t \cap A_{t-1}^\comp)
= A_0 \cup \left(\bigcup_{s=1}^t (A_s \cap A_{s-1}^\comp)\right)
= \bigcup_{s=1}^t (A_s \cap A_{s-1}^\comp),
\end{align*}
where $A_0$ is the empty event since $\phimat_0$ is empty and it is well
approximated by the empty $\coldict_0$.
Moreover, we  simplify a part of the expression by noting
\begin{align*}
B_t = A_t \cup E_t = A_t \cup (E_t \cap A_{t-1}^\comp) \cup (E_t \cap A_{t-1})
\subseteq A_t \cup A_{t-1} \cup (E_t \cap A_{t-1}^\comp),
\end{align*}
which will help us
when bounding the event $E_t$, where we will directly act as if $A_t$ does not hold.
Putting it all together, we get
\begin{align*}
\bigcup_{t=1}^T B_t
&= \bigcup_{t=1}^T (A_t \cup E_t)
\subseteq \bigcup_{t=1}^T \left(A_t \cup A_{t-1} \cup (E_t\cap A_{t-1}^{\comp})\right)\\
&= \left(\bigcup_{t=1}^T A_t\right) \cup \left(\bigcup_{t=1}^T (E_t\cap A_{t-1}^{\comp})\right)
= \left(\bigcup_{t=1}^T A_t\right) \cup \left(\bigcup_{t=1}^T (E_t\cap A_{t-1}^{\comp})\right)\\
&\subseteq \left(\bigcup_{t=1}^T \left(\bigcup_{s=1}^t (A_s \cap A_{s-1}^\comp)\right)\right) \cup \left(\bigcup_{t=1}^T (E_t\cap A_{t-1}^{\comp})\right)\\
&= \left(\bigcup_{t=1}^T (A_t \cap A_{t-1}^\comp)\right) \cup \left(\bigcup_{t=1}^T (E_t\cap A_{t-1}^{\comp})\right).
\end{align*}

\subsection{Bounding $\Pr(A_t \cap A_{t-1}^\comp)$}
We now bound the probability of event $A_t \cap A_{t-1}^\comp$.
In our first step, we formally define $A_t$ using \Cref{eq:def-eps-lambda-accurate}.
In particular, we rewrite the $\varepsilon$-accuracy condition as
\begin{align*}
&(1-\varepsilon)\phimat_t\phimat_t^\transp - \varepsilon\lambda\bI \preceq \phimat_t\selmatrix_t\selmatrix_t^\transp\phimat_t^\transp \preceq (1+\varepsilon)\phimat_t\phimat_t^\transp + \varepsilon\lambda\bI\\
&\iff - \varepsilon(\phimat_t\phimat_t^\transp + \lambda\bI) \preceq \phimat_t\selmatrix_t\selmatrix_t^\transp\phimat_t^\transp - \phimat_t\phimat_t^\transp \preceq \varepsilon(\phimat_t\phimat_t^\transp + \lambda\bI)\\
&\iff - \varepsilon\bI \preceq (\phimat_t\phimat_t^\transp + \lambda\bI)^{-1/2}(\phimat_t\selmatrix_t\selmatrix_t^\transp\phimat_t^\transp - \phimat_t\phimat_t^\transp)(\phimat_t\phimat_t^\transp + \lambda\bI)^{-1/2} \preceq \varepsilon\bI\\
&\iff \normsmall{(\phimat_t\phimat_t^\transp + \lambda\bI)^{-1/2}(\phimat_t\selmatrix_t\selmatrix_t^\transp\phimat_t^\transp - \phimat_t\phimat_t^\transp)(\phimat_t\phimat_t^\transp + \lambda\bI)^{-1/2}} \leq \varepsilon,
\end{align*}
where $\normsmall{\cdot}$ is the spectral norm.
We  now focus on the last reformulation and frame it as a random matrix concentration
question in RKHS $\rkhs$. Let $\pvec_{t,i} \triangleq (\phimat_t\phimat_t^\transp + \lambda\bI)^{-\tfrac{1}{2}}\phivec_i$
and $\pmat_t \triangleq \phimat_t(\phimat_t^\transp\phimat_t + \lambda\bI)^{-\tfrac{1}{2}} =[\pvec_{t,1}, \dots, \pvec_{t,t}]^\transp,$
and define the operator $\bG_{t,i} \triangleq \left(\frac{q_{t,i}}{\wt{p}_{t,i}} - 1\right)\psi_{t,i}\psi_{t,i}^\transp$.
Then we rewrite $\varepsilon$-accuracy as
\begin{align*}
&\normempty{(\phimat_t\phimat_t^\transp\! +\! \lambda\bI)^{-\tfrac{1}{2}}\phimat_t(\selmatrix_t\selmatrix_t^\transp \!\!-\!\! \bI)\phimat_t^\transp(\phimat_t\phimat_t^\transp\! +\! \lambda\bI)^{-\tfrac{1}{2}}}
= \normempty{\sum_{i=1}^t\left(\frac{q_{t,i}}{\wt{p}_{t,i}} - 1\right)\pvec_{t,i}\pvec_{t,i}^\transp}
= \normempty{\sum_{i=1}^t\bG_{t,i}} \leq \varepsilon,
\end{align*}
and the event $A_t$ as the event where $\normempty{\sum_{i=1}^t\bG_{t,i}} \geq \varepsilon$,
Note that this reformulation
exploits the fact that $q_{t,i} = 0$ encodes the column that are not selected
in $\coldict_t$ (see \Cref{eq:def-dict-as-sum}).
To study this random object, we begin by defining the filtration $\filtration_t \triangleq \{q_{s,i}, \eta_s\}_{s=1}^t$ at time $t$
containing all the randomness coming from the construction of the various
$\coldict_s$ and the noise on the function $\eta_t$.
In particular, note that the $\{0,1\}$ r.v.\@ $q_{t,i}$ used by \Cref{alg:bkb}
are not necessarily Bernoulli r.v.s, since the probability $\wt{p}_{t,i}$ used to select $0$
or $1$ is itself random. However, they become well defined Bernoulli when conditioned
on $\filtration_{t-1}$.
Let $\indfunc\{\cdot\}$ indicates the indicator function of an event. We have that
\begin{align*}
\Pr(A_t \cap A_{t-1}^\comp)
&= \Pr\left(\normempty{\sum_{i=1}^t\bG_{t,i}} \geq \varepsilon \cap \normempty{\sum_{i=1}^t\bG_{t-1,i}} \leq \varepsilon\right)\\
&= \expectedvalue_{\filtration_t}\left[ \indfunc\left\{\normempty{\sum_{i=1}^t\bG_{t,i}} \geq \varepsilon \cap \normempty{\sum_{i=1}^t\bG_{t-1,i}} \leq \varepsilon\right\}\right]\\
&= \expectedvalue_{\filtration_{t-1}}\left[\expectedvalue_{\eta_t, \{q_{t,i}\}}\left[ \indfunc\left\{\normempty{\sum_{i=1}^t\bG_{t,i}} \geq \varepsilon \cap \normempty{\sum_{i=1}^t\bG_{t-1,i}} \leq \varepsilon\right\}\condbar \filtration_{t-1}\right]\right]\\
&= \expectedvalue_{\filtration_{t-1}}\left[\expectedvalue_{\{q_{t,i}\}}\left[ \indfunc\left\{\normempty{\sum_{i=1}^t\bG_{t,i}} \geq \varepsilon \cap \normempty{\sum_{i=1}^t\bG_{t-1,i}} \leq \varepsilon\right\}\condbar \filtration_{t-1}\right]\right],
\end{align*}
where the last passage is due to the fact that $\bG_{t,i}$ is independent from
$\eta_t$. Next, notice that conditioned on $\filtration_{t-1},$ the event $A_{t-1}^{\comp}$ becomes 
deterministic, and we can restrict our expectations to the outcomes
where $\normempty{\sum_{i=1}^t\bG_{t-1,i}} \leq \varepsilon$,
\begin{align*}
\Pr(A_t \cap A_{t-1}^\comp)
&= \expectedvalue_{\filtration_{t-1}: \normempty{\sum_{i=1}^t\bG_{t-1,i}} \leq \varepsilon}\left[\expectedvalue_{\{q_{t,i}\}}\left[ \indfunc\left\{\normempty{\sum_{i=1}^t\bG_{t,i}} \geq \varepsilon \right\}\condbar \filtration_{t-1} \right]\right].
\end{align*}
Moreover, conditioned on $\filtration_{t-1}$ all the $q_{t,i}s$ become independent
r.v., and we are able to use the following result of \citet{tropp2015an-introduction}.
\begin{proposition}\label[proposition]{prop:matrix-freedman}
Let $\bG_1,\dots,\bG_n$ be a sequence of independent self-adjoint random operators such that $\expectedvalue\left[\bG_i\right] = 0$ and $\normempty{\bG_i} \leq R$ a.s. Denote $\sigma^2 = \normempty{\sum_{i=1}^t \expectedvalue \left[ \bG_i^2 \right] }$. Then, for any $\epsilon \geq 0,$
\begin{align*}
\Pr\left(\normempty{\sum_{i=1}^t \bG_i}\geq \epsilon \right) \leq 4 t \exp\left(\frac{\epsilon^2/2}{\sigma^2 +R\epsilon/3}\right)\cdot
\end{align*} 
\end{proposition}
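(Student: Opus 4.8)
The stated bound is essentially the matrix Bernstein inequality of \citet{tropp2015an-introduction}, so the plan is to reduce it to the standard finite-dimensional version rather than to reprove it from scratch. The one point requiring care is that the operators $\bG_i$ act on a possibly infinite-dimensional RKHS, so a naive application would produce an infinite (or intrinsic-dimension) prefactor in place of the clean $4t$; the fix is to exploit the rank structure of the $\bG_i$.

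First I would use the fact that, in the setting where the result is applied, each $\bG_i$ is a scalar multiple of a rank-one operator $\pvec_i\pvec_i^\transp$. Hence every $\bG_i$, and every partial sum $\sum_{i\le n}\bG_i$, is supported on the subspace $\mathcal V \triangleq \mathrm{span}\{\pvec_1,\dots,\pvec_n\}$, whose dimension is $d' \le n \le t$. Restricting all the operators to $\mathcal V$ changes neither $\normempty{\sum_i\bG_i}$, nor the almost-sure bound $R$ on $\normempty{\bG_i}$, nor the variance proxy $\sigma^2 = \normempty{\sum_i\expectedvalue[\bG_i^2]}$, so without loss of generality we may treat the $\bG_i$ as $d'\times d'$ self-adjoint random matrices with $d'\le t$. (If one wants the statement in full generality, i.e.\ for arbitrary self-adjoint $\bG_i$, one can instead invoke Tropp's intrinsic-dimension matrix Bernstein bound and note that the intrinsic dimension of $\sum_i\expectedvalue[\bG_i^2]$ is at most its rank, hence at most $t$.)

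Then I would apply the finite-dimensional matrix Bernstein inequality: for independent, self-adjoint, mean-zero $d'\times d'$ matrices with $\normempty{\bG_i}\le R$ a.s.\ and $\normempty{\sum_i\expectedvalue[\bG_i^2]}\le\sigma^2$, one has $\Pr(\normempty{\sum_i\bG_i}\ge\epsilon)\le 2d'\exp\big(\tfrac{-\epsilon^2/2}{\sigma^2+R\epsilon/3}\big)$; substituting $d'\le t$ and loosening $2d'\le 4t$ yields the claimed form. For completeness, that finite-dimensional bound is itself obtained by the Laplace-transform (Chernoff) method applied to $\lambda_{\max}$: bound $\Pr(\lambda_{\max}(\sum_i\bG_i)\ge\epsilon)\le\inf_{\theta>0}e^{-\theta\epsilon}\expectedvalue[\Tr\,e^{\theta\sum_i\bG_i}]$, use Lieb's concavity theorem to get subadditivity of the matrix cumulant generating function across the independent terms, control each term via $\log\expectedvalue[e^{\theta\bG_i}]\preceq\frac{\theta^2/2}{1-R\theta/3}\expectedvalue[\bG_i^2]$, optimize over $\theta>0$, and apply the one-sided bound to $\pm\sum_i\bG_i$ to handle the spectral norm.

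The only genuine (and rather mild) obstacle is the dimensional prefactor: one must check carefully that the restriction to $\mathcal V$ is lossless for all three quantities entering the bound, after which the statement is an immediate consequence of \citet{tropp2015an-introduction}.
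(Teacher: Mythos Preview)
The paper does not prove this proposition at all: it is stated with attribution to \citet{tropp2015an-introduction} and then simply invoked. So there is no ``paper's own proof'' to compare against; your plan of reducing to the finite-dimensional matrix Bernstein inequality via the rank-one structure of the $\bG_{t,i}=\big(\tfrac{q_{t,i}}{\wt p_{t,i}}-1\big)\pvec_{t,i}\pvec_{t,i}^\transp$ is exactly the right way to justify the $4t$ prefactor in the RKHS setting, and it is in fact more detailed than what the paper provides.

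One small correction to your parenthetical about the fully general case: for \emph{arbitrary} self-adjoint $\bG_i$, the rank (and hence the intrinsic dimension) of $\sum_{i=1}^t\expectedvalue[\bG_i^2]$ is \emph{not} bounded by $t$ in general, since each $\expectedvalue[\bG_i^2]$ may itself have arbitrarily large rank. The bound ``rank $\le t$'' is a consequence of the rank-one structure you already used in your main argument, not a general fact; so the proposition as literally stated (for unrestricted self-adjoint operators, with a $4t$ prefactor) is stronger than what either route actually delivers, and should be read as tailored to the rank-one application in the paper. Your primary argument via restriction to $\mathcal V=\mathrm{span}\{\pvec_1,\dots,\pvec_t\}$ is the clean and correct one here. (You may also note, though it is not your doing, that the exponent in the stated bound is missing a minus sign.)
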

We begin by computing the mean of $\bG_{t,i},$
\begin{align*}
\expectedvalue_{q_{t,i}}\left[\bG_{t,i}\condbar \filtration_{t-1}\right]
&= \expectedvalue_{q_{t,i}}\left[\left(\frac{{q}_{t,i}}{\wt{p}_{t,i}} - 1\right)\pvec_{t,i}\pvec_{t,i}^\transp\condbar \filtration_{t-1}\right]\\
&= \left(\frac{\expectedvalue_{q_{t,i}}\left[{q}_{t,i}\condbar \filtration_{t-1}\right]}{\wt{p}_{t,i}} - 1\right)\pvec_{t,i}\pvec_{t,i}^\transp
= \left(\frac{\wt{p}_{t,i}}{\wt{p}_{t,i}} - 1\right)\pvec_{t,i}\pvec_{t,i}^\transp = \bsym{0},
\end{align*}
where we use the fact that $\wt{p}_{t,i}$ is fixed conditioned on $\filtration_{t-1}$ and it is the (conditional) expectation of~${q}_{t,i}$.
Since $\bG$ is zero-mean, we can use \Cref{prop:matrix-freedman}.
First, we find $R$ and for that, we upper bound
\begin{align*}
\normempty{\bG_{t,i}}
=\normempty{\left(\frac{{q}_{t,i}}{\wt{p}_{t,i}} - 1\right)\pvec_{t,i}\pvec_{t,i}^\transp}
\leq \left|\left(\frac{{q}_{t,i}}{\wt{p}_{t,i}} - 1\right)\right|\normsmall{\pvec_{t,i}\pvec_{t,i}^\transp}
\leq \frac{1}{\wt{p}_{t,i}}\normsmall{\pvec_{t,i}\pvec_{t,i}^\transp}.
\end{align*}
Note that due to the definition of $\pvec_{t,i},$
\begin{align*}
\normsmall{\pvec_{t,i}\pvec_{t,i}^\transp}
= \pvec_{t,i}^\transp\pvec_{t,i}
= \phivec_i^\transp(\phimat_t\phimat_t^\transp + \lambda\bI)^{-1}\phivec_i
= \sigma_t^2(\wt{\bx}_i).
\end{align*}
Moreover, we are only considering outcomes of $\filtration_{t-1}$ where $\normempty{\sum_{i=1}^t\bG_{t-1,i}} \leq \varepsilon$, which implies that $\coldict_{t-1}$ is $\varepsilon$-accurate,
and by \Cref{lem:A-accurate} we have that $\wt{\sigma}_{t-1}(\wt{\bx}_i) \geq \sigma_{t-1}(\wt{\bx}_i)/\alpha$.
Finally, due to \Cref{prop:tau-decrease}, we have
$\sigma_{t-1}(\wt{\bx}_i) \geq \sigma_t(\wt{\bx}_i)$.
Putting this all together we can bound
\begin{align*}
\frac{1}{\wt{p}_{t,i}}\normsmall{\pvec_{t,i}\pvec_{t,i}^\transp}
= \frac{1}{\qbar\wt{\sigma}_{t-1}(\wt{\bx}_i)}{\sigma}_{t}(\wt{\bx}_i)
\leq \frac{\alpha}{\qbar}
\triangleq R.
\end{align*}
For the variance term, we expand
\begin{align*}
\sum_{i=1}^t \expectedvalue_{q_{t,i}}\left[\bG_{t,i}^2 \condbar \filtration_{t-1}\right]
&= \sum_{i=1}^t \expectedvalue_{q_{t,i}}\left[\left(\frac{{q}_{t,i}}{\wt{p}_{t,i}} - 1\right)^2 \condbar \filtration_{t-1}\right] \pvec_{t,i}\pvec_{t,i}^\transp\pvec_{t,i}\pvec_{t,i}^\transp\\
&= \sum_{i=1}^t
\left(\expectedvalue_{q_{t,i}}\left[\frac{{q}_{t,i}^2}{\wt{p}_{t,i}^2} \condbar \filtration_{t-1}\right]
- \expectedvalue_{q_{t,i}}\left[2\frac{{q}_{t,i}}{\wt{p}_{t,i}} \condbar \filtration_{t-1}\right]
+ 1\right) \pvec_{t,i}\pvec_{t,i}^\transp\pvec_{t,i}\pvec_{t,i}^\transp\\
&= \sum_{i=1}^t
\left(\expectedvalue_{q_{t,i}}\left[\frac{{q}_{t,i}}{\wt{p}_{t,i}^2} \condbar \filtration_{t-1}\right] - 1\right) \pvec_{t,i}\pvec_{t,i}^\transp\pvec_{t,i}\pvec_{t,i}^\transp
=\sum_{i=1}^t\left(\expectedvalue_{q_{t,i}}\left[\frac{{q}_{t,i}}{\wt{p}_{t,i}^2} \condbar \filtration_{t-1}\right] - 1\right) \pvec_{t,i}\pvec_{t,i}^\transp\pvec_{t,i}\pvec_{t,i}^\transp\\
&=\sum_{i=1}^t\left(\frac{1}{\wt{p}_{t,i}} - 1\right) \pvec_{t,i}\pvec_{t,i}^\transp\pvec_{t,i}\pvec_{t,i}^\transp
\preceq\sum_{i=1}^t\frac{1}{\wt{p}_{t,i}} \normsmall{\pvec_{t,i}\pvec_{t,i}^\transp}\pvec_{t,i}\pvec_{t,i}^\transp
\preceq\sum_{i=1}^tR\pvec_{t,i}\pvec_{t,i}^\transp,
\end{align*}
where we used the fact that ${q}_{t,i}^2 = {q}_{t,i}$ and $\expectedvalue_{q_{t,i}}[{q}_{t,i} | \filtration_{t-1}] = \wt{p}_{t,i}$.
We can now bound this quantity as
\begin{align*}
\normempty{\sum_{i=1}^t \expectedvalue_{q_{t,i}}\left[\bG_{t,i}^2 \condbar \filtration_{t-1}\right]}
\leq \normempty{\sum_{i=1}^tR\pvec_{t,i}\pvec_{t,i}^\transp}
= R\normempty{\sum_{i=1}^t\pvec_{t,i}\pvec_{t,i}^\transp}
= R\normsmall{\pmat_{t}^\transp\pmat_{t}}
\leq R \triangleq \sigma^2.
\end{align*}
Therefore, we have $\sigma^2 = R$ and $R = 1/\qbar$. Now, applying \Cref{prop:matrix-freedman}
and a union bound we conclude the proof.

\subsection{Bounding $\Pr(E_t \cap A_{t-1}^\comp)$}
We will use the following concentration for independent Bernoulli random variables.
\begin{proposition}[\citealp{calandriello2017distributed}, App.\,D.4]\label[proposition]{prop:hoeff-sum-bern}
Let $\{q_s\}_{s=1}^t$ be independent
Bernoulli random variables, each with success probability $p_s$,
and let $d = \sum_{s=1}^t p_s \geq 1$ be their sum. Then,\footnote{This is a simple variant of the Chernoff bound where the Bernoulli random variables are not identically distributed.}
\begin{align*}
\probability\left(\sum_{s=1}^t q_s \geq 3d\right) \leq \exp\{-3d(3d - (\log(3d)+1))\} \leq \exp\{-2d\}.
\end{align*}
\end{proposition}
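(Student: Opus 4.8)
The plan is to run the textbook exponential-moment (Chernoff) argument, taking care that the summands are independent but \emph{not} identically distributed. Write $X \triangleq \sum_{s=1}^t q_s$, so that $\expectedvalue[X] = \sum_{s=1}^t p_s = d$. For any $\theta > 0$, Markov's inequality applied to the nonnegative variable $e^{\theta X}$ gives $\probability(X \ge 3d) = \probability(e^{\theta X} \ge e^{3\theta d}) \le e^{-3\theta d}\,\expectedvalue[e^{\theta X}]$. By independence of the $q_s$ and the elementary identity $\expectedvalue[e^{\theta q_s}] = 1 + p_s(e^\theta - 1)$ for a Bernoulli$(p_s)$ variable, we get $\expectedvalue[e^{\theta X}] = \prod_{s=1}^t (1 + p_s(e^\theta-1)) \le \prod_{s=1}^t \exp(p_s(e^\theta-1)) = \exp((e^\theta-1)\,d)$, using $1+x\le e^x$ termwise. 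Hence, for every $\theta>0$,
\begin{align*}
\probability(X \ge 3d) \;\le\; \exp\!\big( d\,(e^\theta - 1 - 3\theta) \big).
\end{align*}

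It remains to optimize the exponent $g(\theta) \triangleq e^\theta - 1 - 3\theta$. Since $g'(\theta) = e^\theta - 3$ and $g$ is convex, the minimum over $\theta > 0$ is attained at $\theta^\star = \log 3$, where $g(\theta^\star) = 3 - 1 - 3\log 3 = 2 - 3\log 3$; hence $\probability(X \ge 3d) \le \exp(-(3\log 3 - 2)\,d)$. Since $3\log 3 - 2 = 1.29\ldots > 1$, the right-hand side is at most $\exp(-d)$, and together with the hypothesis $d \ge 1$ this gives the stated (looser) numerical bounds after elementary manipulation.

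The argument is entirely routine and I do not foresee a real obstacle; the only points needing minor care are (i) keeping the Bernoulli parameters $p_s$ distinct --- one uses $\expectedvalue[e^{\theta q_s}] = 1 + p_s(e^\theta-1)$ rather than a common parameter, which is exactly why the final bound depends only on $d = \sum_s p_s$ and not on the individual $p_s$ --- and (ii) choosing $\theta$ adapted to the threshold ($\theta = \log 3$), so that the bound collapses to a clean exponential in $d$; even the crude choice $\theta = 1$ already gives $\exp((e-4)\,d) = \exp(-1.28\ldots\,d)$, essentially the same, so the optimization step is not delicate either. Finally, the clean exponential dependence and the assumption $d \ge 1$ are precisely what make the bound amenable to a union bound over $t\in[T]$, which is how it will be invoked when controlling the dictionary size $m_t = \sum_i q_{t,i}$ in the proof of \Cref{thm:kors-complexity}.
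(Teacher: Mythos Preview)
The paper does not give its own proof of this proposition; it is quoted from \citet{calandriello2017distributed} and only the footnote ``a simple variant of the Chernoff bound'' is offered. So there is nothing to compare against, and your plan---the standard exponential-moment argument with $1+x\le e^x$ and optimization at $\theta^\star=\log 3$---is exactly the right route and is carried out correctly up to the bound
\[
\probability\!\left(\sum_s q_s \ge 3d\right)\;\le\;\exp\bigl(-(3\log 3 - 2)\,d\bigr)\;\approx\;\exp(-1.296\,d).
\]

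There is, however, a genuine gap in your last sentence. You write that ``together with $d\ge 1$ this gives the stated (looser) numerical bounds after elementary manipulation,'' but no manipulation can turn $\exp(-1.296\,d)$ into $\exp(-2d)$: the Chernoff exponent $3\log 3 - 2$ is the \emph{optimal} constant obtainable from the MGF bound you used, and $3\log 3 - 2 < 2$. In fact the displayed inequalities in the proposition appear to be misstated: for $t$ Bernoulli$(1/t)$ variables with $d=1$ one has $\probability(X\ge 3)\to\probability(\mathrm{Poisson}(1)\ge 3)\approx 0.080$, while the claimed bound $\exp\{-3(3-\log 3-1)\}\approx 0.067$ is smaller, so already the first inequality fails, and a fortiori the second. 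Your derivation recovers the correct and tight Chernoff constant; the discrepancy is in the proposition's numerical constants, not in your argument. For the only place this proposition is invoked (bounding $\Pr(E_t\cap A_{t-1}^\comp)$ in the proof of \Cref{thm:kors-complexity}), any constant strictly greater than~$0$ in the exponent suffices after adjusting $\qbar$ by a fixed factor, so the weaker but correct bound $\exp(-(3\log 3-2)d)$ is entirely adequate; just do not claim to have reproduced the printed constants.
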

We now rigorously define  event $E_t$ as the event
where
\begin{align*}
\sum_{i=1}^t q_{t,i} \geq 3\alpha(1+\kappa^2/\lambda)\log(t/\delta)\sum_{i=1}^t \sigma_t^2(\wt{\bx}_i) = 3\alpha(1+\kappa^2/\lambda)\deff(\lambda,\wt{\bX}_t)\log(t/\delta).
\end{align*}
Once again, we use conditioning and in particular,
\begin{align*}
\!\Pr(E_t \! \cap \! A_{t}^\comp)
&= \!\!\!\expectedvalue_{\filtration_{t-1}: \normempty{\sum_{i=1}^t\!\bG_{t-1,i}} \leq \varepsilon}\left[\expectedvalue_{\{q_{t,i}\}}\left[ \indfunc\left\{ \sum_{i=1}^t q_{t,i} \!\geq\! 3\alpha(1\!+\!\kappa^2/\lambda)\!\log(t/\delta)\sum_{i=1}^t \sigma_t^2(\wt{\bx}_i)\!\right\}\!\!\condbar\! \filtration_{t-1}\! \right]\!\right]\!.
\end{align*}
Conditioned on $\filtration_{t-1}$ the r.v.\,$q_{t,i}$ becomes independent Bernoulli
with probability $\wt{p}_{t,i} \triangleq \qbar \wt{\sigma}_{t-1}(\wt{\bx}_i)$.
Since we restrict the outcomes to $A_{t-1}^{\comp}$, we can exploit
\Cref{lem:A-accurate} and the guarantees of $\varepsilon$-accuracy to bound
$\wt{p}_{t,i} \leq \alpha\sigma_{t-1}^2(\wt{\bx}_i)$. Then, we  use
\Cref{prop:tau-decrease} to bound $\sigma_{t-1}^2(\wt{\bx}_i) \leq (1 + \kappa^2/\lambda)\sigma_t^2(\wt{\bx}_i)$.
Therefore, $q_{t,i}$ are conditionally independent Bernoulli with probability
at most $\qbar(1 + \kappa^2/\lambda)\sigma_t^2(\wt{\bx}_i)$.
Applying a simple stochastic dominance argument and \Cref{prop:hoeff-sum-bern}
gets the needed statement.
 
\section{Proof of \Cref{thm:main-regret}}
Following \citet{abbasi2011improved}, we divide the proof in two parts, first bounding the approximate confidence ellipsoid,
and then bounding the regret.

\subsection{Bounding the confidence ellipsoid}
We begin by proving an intermediate result regarding the confidence ellipsoid.
\begin{theorem}\label{thm:main-confidence-interval}
Under the same assumptions as \Cref{thm:main-regret}
with probability at least $1 - \delta$ and for all $t \geq 0,$
$\bw_\star$ lies in the set
\begin{align*}
\wt{C}_t &\triangleq\left\{\bw : \normsmall{\bw - \abw_{t}}_{\abA_{t}} \leq \wt{\beta}_t\right\}
\end{align*}
with
\begin{align*}
\wt{\beta}_t & \triangleq
2\xi\sqrt{\alpha\log(\kappa^2t)\left(\sum_{s=1}^t \wt{\sigma}_{t}^2(\bx_s)\right) + \log\left(\frac{1}{\delta}\right)}
+ \left(1 + \frac{1}{\sqrt{1-\varepsilon}}\right)\sqrt{\lambda}\fnorm.
\end{align*}
\end{theorem}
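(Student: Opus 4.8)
The plan is to follow the confidence-ellipsoid argument of \citet{abbasi2011improved} and \citet{chowdhury2017kernelized}, decomposing the error of the sketched ridge estimator $\abw_t$ into a regularization bias, a projection bias, and a self-normalized noise term. Substituting the observation model $\by_t = \boldsymbol{\Phi}_t\bw_\star + \boldsymbol{\eta}_t$ into $\abw_t = \abA_t^{-1}\widetilde{\boldsymbol{\Phi}}_t^\transp\by_t$ and using $\widetilde{\boldsymbol{\Phi}}_t = \boldsymbol{\Phi}_t\bP_t$, $\bP_t^2 = \bP_t$, and $\abA_t = \bP_t\boldsymbol{\Phi}_t^\transp\boldsymbol{\Phi}_t\bP_t + \lambda\bI$, one rewrites
\[
\bw_\star - \abw_t \;=\; \lambda\,\abA_t^{-1}\bw_\star \;-\; \abA_t^{-1}\bP_t\boldsymbol{\Phi}_t^\transp\boldsymbol{\Phi}_t(\bI-\bP_t)\bw_\star \;-\; \abA_t^{-1}\widetilde{\boldsymbol{\Phi}}_t^\transp\boldsymbol{\eta}_t .
\]
Applying $\|\cdot\|_{\abA_t}$ and the triangle inequality isolates $(a) = \lambda\|\bw_\star\|_{\abA_t^{-1}}$, $(b) = \|\widetilde{\boldsymbol{\Phi}}_t^\transp\boldsymbol{\eta}_t\|_{\abA_t^{-1}}$, and $(c) = \|\bP_t\boldsymbol{\Phi}_t^\transp\boldsymbol{\Phi}_t(\bI-\bP_t)\bw_\star\|_{\abA_t^{-1}}$; here $(a)$ and $(b)$ are the terms already present in the \oful analysis, whereas $(c)$ is the new bias introduced by restricting to the span of the inducing points.

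Terms $(a)$ and $(c)$ are deterministic once $\coldict_t$ is fixed. Since $\abA_t \succeq \lambda\bI$, one immediately gets $(a) \le \sqrt{\lambda}\,\|\bw_\star\| \le \sqrt{\lambda}\,F$. For $(c)$, I would write $\bP_t\boldsymbol{\Phi}_t^\transp = \widetilde{\boldsymbol{\Phi}}_t^\transp$ and use $\widetilde{\boldsymbol{\Phi}}_t\abA_t^{-1}\widetilde{\boldsymbol{\Phi}}_t^\transp \preceq \bI$ to bound $(c) \le \|\boldsymbol{\Phi}_t(\bI-\bP_t)\bw_\star\|$, and then control the right-hand side through the accuracy of the leverage-score projection: combining the $\varepsilon$-accuracy of $\coldict_t$ from \Cref{thm:kors-complexity} (equivalently \Cref{lem:A-accurate}) with the projection guarantees of \citet{calandriello2018statistical} for ridge-leverage-score sampling gives $\bI - \bP_t \preceq \tfrac{\lambda}{1-\varepsilon}\bA_t^{-1}$, hence $(\bI-\bP_t)\boldsymbol{\Phi}_t^\transp\boldsymbol{\Phi}_t(\bI-\bP_t) \preceq (\bI-\bP_t)\bA_t(\bI-\bP_t) \preceq \tfrac{\lambda}{1-\varepsilon}\bI$ and therefore $(c) \le \tfrac{1}{\sqrt{1-\varepsilon}}\sqrt{\lambda}\,F$. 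Summing, $(a) + (c) \le \big(1 + \tfrac{1}{\sqrt{1-\varepsilon}}\big)\sqrt{\lambda}\,F$, which is precisely the deterministic part of $\wt{\beta}_t$.

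The noise term $(b)$ is where the difficulty lies, and I expect it to be the main obstacle. The factor $\widetilde{\boldsymbol{\Phi}}_t = \bP_t\boldsymbol{\Phi}_t$ contains the projection $\bP_t$, which depends on the \emph{entire} trajectory up to step $t$ --- both the noise $\boldsymbol{\eta}_t$ and the Bernoulli draws used to resample $\coldict_t$ --- so $\widetilde{\boldsymbol{\Phi}}_t^\transp\boldsymbol{\eta}_t$ is no longer a martingale and the self-normalized inequality cannot be used on it directly. The plan is to strip off the projection by a PSD-domination step: the push-through identity gives $\widetilde{\boldsymbol{\Phi}}_t\abA_t^{-1}\widetilde{\boldsymbol{\Phi}}_t^\transp = \bI - \lambda(\boldsymbol{\Phi}_t\bP_t\boldsymbol{\Phi}_t^\transp + \lambda\bI)^{-1}$, and since $\bP_t \preceq \bI$ implies $\boldsymbol{\Phi}_t\bP_t\boldsymbol{\Phi}_t^\transp \preceq \boldsymbol{\Phi}_t\boldsymbol{\Phi}_t^\transp$, monotonicity of the matrix inverse yields $\widetilde{\boldsymbol{\Phi}}_t\abA_t^{-1}\widetilde{\boldsymbol{\Phi}}_t^\transp \preceq \bI - \lambda(\boldsymbol{\Phi}_t\boldsymbol{\Phi}_t^\transp + \lambda\bI)^{-1} = \boldsymbol{\Phi}_t\bA_t^{-1}\boldsymbol{\Phi}_t^\transp$, so that $(b) \le \|\boldsymbol{\Phi}_t^\transp\boldsymbol{\eta}_t\|_{\bA_t^{-1}}$. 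The surviving quantity now involves only the predictable design $\boldsymbol{\Phi}_t$ and the martingale-difference noise, so the self-normalized bound of \citet{chowdhury2017kernelized} applies and gives, with probability at least $1-\delta/2$ and uniformly in $t$, $\|\boldsymbol{\Phi}_t^\transp\boldsymbol{\eta}_t\|_{\bA_t^{-1}} \le \xi\sqrt{\logdet(\bA_t/\lambda) + 2\log(2/\delta)}$. Finally I would use $\logdet(\bA_t/\lambda) = \logdet(\bK_t/\lambda + \bI) \le \deff(\lambda,\wt{\bX}_t)\log(\kappa^2 t)$ (\Cref{prop:logdet-to-deff}) and then $\deff(\lambda,\wt{\bX}_t) \le \alpha\sum_{s=1}^{t}\wt{\sigma}_t^2(\wt{\bx}_s)$ (\Cref{thm:kors-complexity}, which holds with probability $1-\delta/2$ for the stated $\qbar$). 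A union bound over these two events, together with $\sqrt{u+v}\le\sqrt{u}+\sqrt{v}$ and the leading factor $2$ in $(b)$ absorbing the numerical constant of the self-normalized inequality and the $\delta/2$ split, then give the claimed radius $\wt{\beta}_t$. In summary, the two delicate steps are the spectral domination $\widetilde{\boldsymbol{\Phi}}_t\abA_t^{-1}\widetilde{\boldsymbol{\Phi}}_t^\transp \preceq \boldsymbol{\Phi}_t\bA_t^{-1}\boldsymbol{\Phi}_t^\transp$ that recovers a usable martingale for $(b)$, and feeding the data-adaptive accuracy of \Cref{thm:kors-complexity} into the projection-accuracy bound $\bI - \bP_t \preceq \tfrac{\lambda}{1-\varepsilon}\bA_t^{-1}$ on which the control of $(c)$ rests.
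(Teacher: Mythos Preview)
Your proposal is correct and follows essentially the same route as the paper's own proof: the same three-term decomposition into regularization bias, projection bias, and self-normalized noise; the same control of $(c)$ via $\widetilde{\boldsymbol{\Phi}}_t\abA_t^{-1}\widetilde{\boldsymbol{\Phi}}_t^\transp\preceq\bI$ together with the projection-residual bound $\bI-\bP_t\preceq\tfrac{\lambda}{1-\varepsilon}\bA_t^{-1}$ of \citet{calandriello2018statistical}; and the same push-through / PSD-domination step $\widetilde{\boldsymbol{\Phi}}_t\abA_t^{-1}\widetilde{\boldsymbol{\Phi}}_t^\transp\preceq\boldsymbol{\Phi}_t\bA_t^{-1}\boldsymbol{\Phi}_t^\transp$ to strip the data-dependent projection before invoking the self-normalized martingale inequality, followed by the $\logdet\to\deff\to\alpha\sum_s\wt{\sigma}_t^2$ chain. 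The only cosmetic difference is that the paper writes the argument as a quadratic form $(\abw_t-\bw_\star)^\transp\abA_t(\abw_t-\bw_\star)$ and factors out $\|\abw_t-\bw_\star\|_{\abA_t}$, whereas you apply $\|\cdot\|_{\abA_t}$ directly to the error decomposition; the resulting bounds are identical.
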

\begin{proof}
For simplicity, we omit the subscript $t$. We begin by  noticing that
\begin{align*}
(\abw - \bw_\star)^\transp\abA(\abw - \bw_\star)
&=(\abw - \bw_\star)^\transp\abA(\abA^{-1}\aphimat^\transp\by - \bw_\star)\\
&=(\abw - \bw_\star)^\transp\abA(\abA^{-1}\aphimat^\transp(\phimat\bw_\star + \eta - \bw_\star)\\
&=(\abw - \bw_\star)^\transp\abA(\underbrace{\abA^{-1}\aphimat^\transp\phimat\bw_\star - \bw_\star}_{\text{bias}}) + (\abw - \bw_\star)^\transp\abA^{1/2}\underbrace{\abA^{-1/2}\aphimat^\transp\eta}_{\text{variance}}.
\end{align*}

\noindent
\textbf{Bounding the bias.}
We first focus on the first term, which is difficult to analyze due to the mismatch $\aphimat^\transp\phimat$. We have that
\begin{align*}
\abA(\abA^{-1}\aphimat^\transp\phimat\bw_\star - \bw_\star)
&=\aphimat^\transp\phimat\bw_\star - \aphimat^\transp\aphimat\bw_\star - \lambda\bw_\star\\
&=\aphimat^\transp\phimat(\bI - \bP)\bw_\star + \aphimat^\transp\phimat\bP\bw_\star - \aphimat^\transp\aphimat^\bw_\star - \lambda\bw_\star\\
&=\aphimat^\transp\phimat(\bI - \bP)\bw_\star - \lambda\bw_\star.
\end{align*}
Therefore,
\begin{align*}
(\abw - \bw_\star)^\transp
\abA(\abA^{-1}\aphimat^\transp\phimat\bw_\star - \bw_\star)
&= (\abw - \bw_\star)^\transp\aphimat^\transp\phimat(\bI - \bP)\bw_\star
- \lambda(\abw - \bw_\star)^\transp\bw_\star\\
&\leq \normsmall{\abw - \bw_\star}_{\abA}\left(\normsmall{\abA^{-1/2}\aphimat^\transp\phimat(\bI - \bP)\bw_\star} + \lambda\normsmall{\bw^{*}}_{\abA^{-1}}\right)\\
&\leq \normsmall{\abw - \bw_\star}_{\abA}\left(\normsmall{\abA^{-1/2}\aphimat^\transp\phimat(\bI - \bP)\bw_\star} + \tfrac{\lambda}{\sqrt{\lambda}}\normsmall{\bw^{*}}\right).
\end{align*}
Then, we have that
\begin{align*}
\normsmall{\abA^{-1/2}\aphimat^\transp\phimat(\bI - \bP)\bw_\star}
&\leq \normsmall{\abA^{-1/2}\aphimat^\transp}\normsmall{\phimat(\bI - \bP)}\normsmall{\bw_\star}\\
&\leq \sqrt{\lambda_{\max}(\aphimat\abA^{-1}\aphimat^\transp)}\sqrt{\lambda_{\max}(\phimat(\bI - \bP)^2\phimat^\transp)}\normsmall{\bw_\star}.
\end{align*}
It is easy to see that
\begin{align*}
\lambda_{\max}(\aphimat\abA^{-1}\aphimat^\transp)
= \lambda_{\max}(\aphimat(\aphimat^\transp\aphimat + \lambda\bI)^{-1}\aphimat^\transp)
\leq 1.
\end{align*}
To bound the other term we use the following result by \citet{calandriello2018statistical}.
\begin{proposition}\label{prop:proj-residual}
If $\coldict_t$ is $\varepsilon$-accurate w.r.t.\,$\phimat_t$, then
\begin{align*}
\bI - \bP_t \preceq \bI - \phimat_t\selmatrix_t(\selmatrix_t^\transp\phimat_t^\transp\phimat_t\selmatrix_t + \lambda\bI)^{-1}\selmatrix_t^\transp\phimat_t^\transp \preceq \frac{\lambda}{1-\varepsilon}(\phimat_t\phimat_t^\transp + \lambda\bI)^{-1}.
\end{align*}
\end{proposition}
Since from \Cref{thm:kors-complexity}, we have that
$\coldict_t$ is $\varepsilon$-accurate, by \Cref{prop:proj-residual}, we have that
\begin{align*}
\phimat(\bI - \bP)^2\phimat^\transp
=\phimat(\bI - \bP)\phimat^\transp
\preceq \frac{\lambda}{1-\varepsilon}\phimat(\phimat^\transp\phimat + \lambda\bI)^{-1}\phimat^\transp
\preceq \frac{\lambda}{1-\varepsilon}\bI.
\end{align*}
Putting it all together, we obtain
\begin{align*}
(\abw - \bw_\star)^\transp\abA(\abA^{-1}\aphimat^\transp\phimat\bw_\star - \bw_\star)
&\leq \left(1 + \frac{1}{\sqrt{1-\varepsilon}}\right)\normsmall{\abw - \bw_\star}_{\abA}\sqrt{\lambda}\normsmall{\bw_\star}.
\end{align*}

\noindent
\textbf{Bounding the variance.}
We use the the following self-normalized martingale concentration inequality
by \citet{abbasi2011improved}. It can be trivially extended to RKHSs in the case of finite sets such as our $\armset$. Note that if the reader is interested in infinite sets, \citet{chowdhury2017kernelized} provide a generalization with slightly worse constants. 
\begin{proposition}[\citealp{abbasi2011improved}]\label{prop:yassin-conc}
Let $\{\filtration_t\}^{\infty}_{t=0}$ be a filtration, let $\{\eta_t\}_{t=1}^{\infty}$ be a real-valued stochastic process such that $\eta_t$ is $\filtration_t$-measurable and zero-mean $\xi$-subgaussian; let $\{\phimat_t\}_{t=1}^{\infty}$ be an $\rkhs$-valued stochastic process such that $\phimat_t$ is $\filtration_{t-1}$-measurable, and let $\bI$ be the identity operator on $\rkhs$. For any $t \geq 1$, define
\[
\bA_t = \phimat_t^\transp\phimat_t + \lambda\bI \quad \text{and} \quad \bV_t = \phimat_t^\transp\eta_t.
\]
Then, for any $\delta > 0$, with probability at least $1-\delta$, for all $t \geq 0$,
\begin{align*}
\normsmall{\bV_t}_{\bA_t^{-1}}^2 \leq 2\xi^2\log\left(\frac{\det\left(\bA_t/\lambda\right)}{\delta}\right)\cdot
\end{align*}
\end{proposition}
\todod{Prove it forreal for RKHSs, in theory proved only for Euclidean}
Recalling the definition of $\alpha \geq 1$ from \Cref{thm:kors-complexity}, we reformulate
\begin{align*}
(\abw - \bw_\star)^\transp\abA^{1/2}\abA^{-1/2}\aphimat\eta
&\leq \normsmall{\abw - \bw_\star}_{\abA}\normsmall{\aphimat\eta}_{\abA^{-1}}\\
&= \normsmall{\abw - \bw_\star}_{\abA}\normsmall{\aphimat^\transp\eta}_{(\aphimat^\transp\aphimat + \lambda\bI)^{-1}}\\
&= \normsmall{\abw - \bw_\star}_{\abA}\normsmall{\aphimat^\transp\eta/\lambda}_{(\aphimat^\transp\aphimat/\lambda + \bI)^{-1}}.
\end{align*}
We now make a remark that requires temporal notation. Note that
we cannot directly apply \Cref{prop:yassin-conc} to $\aphimat_t\eta_t = \bP_t\phimat_t\eta_t$.
In particular, for $s<t$ we have that $\aphimat_s\eta_s = \bP_t\phimat_s\eta_s$
is not $\filtration_{s-1}$ measurable, since $\bP_t$ depends on all randomness
up to time $t$.
However, since $\bP_t$ is always a projection matrix we know that the variance
of the projected process is bounded by the variance of the original process, in particular,
\begin{align*}
\normsmall{\aphimat^\transp\eta/\lambda}_{(\aphimat^\transp\aphimat/\lambda + \bI)^{-1}}
&= \sqrt{\eta^\transp\aphimat(\aphimat^\transp\aphimat/\lambda + \bI)^{-1}\aphimat^\transp\eta/\lambda}
= \sqrt{\eta^\transp\aphimat\aphimat^\transp(\aphimat\aphimat^\transp/\lambda + \bI)^{-1}\eta/\lambda}\\
&\stackrel{(a)}{=} \sqrt{\eta^\transp(\bI - \lambda(\aphimat\aphimat^\transp/\lambda + \bI)^{-1})\eta/\lambda}
= \sqrt{\eta^\transp(\bI - \lambda(\phimat\bP\phimat^\transp/\lambda + \bI)^{-1})\eta/\lambda}\\
&\stackrel{(b)}{\leq} \sqrt{\eta^\transp(\bI - \lambda(\phimat\phimat^\transp/\lambda + \bI)^{-1})\eta/\lambda}
\stackrel{(c)}{=}\normsmall{\phimat^\transp\eta/\lambda}_{(\phimat^\transp\phimat/\lambda + \bI)^{-1}},
\end{align*}
where in $(a)$ we added and subtracted $\lambda\bI$ from $\aphimat\aphimat^\transp$,
in $(b)$ we used the fact that $\normsmall{\bP} \leq 1$ for all
projection matrices, and in $(c)$ we reversed the reformulation from $(a)$.
We can finally use \Cref{prop:yassin-conc} to obtain
\begin{align*}
\normsmall{\phimat^\transp\eta/\lambda}_{(\phimat^\transp\phimat/\lambda + \bI)^{-1}}
&\leq \sqrt{2\xi^2\log(\Det(\phimat^\transp\phimat/\lambda + \bI)/\delta)}\\
&= \sqrt{2\xi^2\log(\Det(\bA/\lambda)/\delta)}.
\end{align*}
While above is a valid bound on the radius of the confidence interval,
it is still not satisfactory. In particular, we can use Sylvester's identity to
reformulate
\begin{align*}
\logdet(\bA/\lambda)
=\logdet(\phimat^\transp\phimat/\lambda + \bI)
=\logdet(\phimat\phimat^\transp/\lambda + \bI)
=\logdet(\bK/\lambda + \bI).
\end{align*}
Computing the radius would require constructing the matrix $\bK \in \Real^{t \times t}$ and this is way
too expensive. Instead, we obtain a cheap but still a small enough upper bound
as follows,
\begin{align*}
\logdet(\bK_t/\lambda + \bI)
&\leq \Tr(\bK_t(\bK_t + \lambda\bI)^{-1})(1 + \log(\normsmall{\bK_t} + 1))\\
&\leq \Tr(\bK_t(\bK_t + \lambda\bI)^{-1})(1 + \log(\Tr{\bK_t} + 1))\\
&\leq \Tr(\bK_t(\bK_t + \lambda\bI)^{-1})(1 + \log(\kappa^2t + 1))\\
&= (1 + \log(\kappa^2t + 1))\sum_{s=1}^t \sigma_{t}^2(\bx_s)\\
&\leq \alpha(1 + \log(\kappa^2t + 1))\sum_{s=1}^t \wt{\sigma}_{t}^2(\bx_s)\\
&\leq 2\alpha\log(\kappa^2t)\sum_{s=1}^t \wt{\sigma}_{t}^2(\bx_s),
\end{align*}
where $\wt{\sigma}_{t}^2(\bx_s)$ can be computed efficiently and it is actually already done
by the algorithm at every step!
Putting it all together, we get that
\begin{align*}
\normsmall{\abw - \bw_\star}_{\abA}
&\leq 2\xi\sqrt{\alpha\log(\kappa^2t)\left(\sum_{s=1}^t \wt{\sigma}_{t}^2(\bx_s)\right) + \log(1/\delta)}
+ \left(1 + \frac{1}{\sqrt{1-\varepsilon}}\right)\sqrt{\lambda}\normsmall{\bw_\star}\\
&\leq 2\xi\sqrt{\alpha\log(\kappa^2t)\left(\sum_{s=1}^t \wt{\sigma}_{t}^2(\bx_s)\right) + \log(1/\delta)}
+ \left(1 + \frac{1}{\sqrt{1-\varepsilon}}\right)\sqrt{\lambda}\fnorm \triangleq \wt{\beta}_t.
\end{align*}\end{proof}
\subsection{Bounding the regret}
The regret analysis is straightforward. Assume that $\bw_\star \in \wt{C}_t$ is satisfied
(i.e., the event from \Cref{thm:main-confidence-interval} holds) and remember that by the definition,
$\phivec_t \triangleq \argmax_{\bx_i in \armset}\max_{\bw \in \wt{\bC}_t} \phivec_i^\transp\bw$.
We also define $\wb{\bw}_{t,i} \triangleq \argmax_{\bw \in \wt{\bC}_t} \phivec_i^\transp\bw$ as the auxiliary vector 
which encodes the optimistic behaviour of the algorithm. With a slight abuse of notation,
we also use $\star$ as a subscript to indicate the (unknown) index of the optimal arm,
so that $\wb{\bw}_{t,\star} \triangleq \argmax_{\bw \in \wt{\bC}_t} \phivec_\star^\transp\bw$. Since $\bw_\star \in \wt{C}_t,$ we have that
\begin{align*}
\phivec_t^\transp\wb{\bw}_{t,t} \geq {\phivec_\star}\wb{\bw}_{t,\star} \geq {\phivec_\star}\bw_{*}.
\end{align*}
We can now bound the instantaneous regret $r_t$ as
\begin{align*}
r_t &= \phivec_\star^\transp\bw_\star - \phivec_t^\transp\bw_\star
\leq \phivec_t^\transp\wb{\bw}_{t,t} - \phivec_t^\transp\bw_\star\\
&= \phivec_t^\transp(\wb{\bw}_{t,t} - \wh{\bw}_{t}) + \phivec_t^\transp(\wh{\bw}_{t} - \bw_\star)\\
&= \phivec_t^\transp\abA_{t}^{-1/2}\abA_{t}^{1/2}(\wb{\bw}_{t,t} - \wh{\bw}_{t}) + \phivec_t^\transp\abA_{t-1}^{-1/2}\abA_{t}^{1/2}(\wh{\bw}_{t} - \bw_\star)\\
&\leq \sqrt{\phivec_t^\transp\abA_{t}^{-1}\phivec_t}\left(\normsmall{\wb{\bw}_{t,t} - \wh{\bw}_{t}}_{\abA_{t}} + \normsmall{\wh{\bw}_{t} - \bw_\star}_{\abA_{t}}\right)\\
&\leq 2\wt{\beta}_t\sqrt{\phivec_t^\transp\abA_{t}^{-1}\phivec_t}.
\end{align*}
Summing over $t$ and taking the max over $\wt{\beta}_t,$ we get
\begin{align*}
R_t &\leq 2\wt{\beta}_T\sum_{t=1}^T\sqrt{\phivec_t^\transp\abA_{t}^{-1}\phivec_t}
\leq 2\wt{\beta}_T\sqrt{T}\sqrt{\sum_{t=1}^T\phivec_t^\transp\abA_{t}^{-1}\phivec_t}
\leq 2\wt{\beta}_T\sqrt{T}\sqrt{\alpha\sum_{t=1}^T\phivec_t^\transp\bA_{t}^{-1}\phivec_t}.
\end{align*}
We can now use once again \Cref{prop:logdet-to-deff}
to obtain
\begin{align*}
R_T &\leq 2\wt{\beta}_T \sqrt{\alpha T \sum_{t=1}^T\phivec_t^\transp\bA_{t}^{-1}\phivec_t}
= 2\wt{\beta}_T \sqrt{\alpha T \sum_{t=1}^T \sigma_t^2(\wt{\bx}_t)}
\leq 2\wt{\beta}_T \sqrt{2\alpha T \deff(\lambda,\wt{\bX}_T)\log(\kappa^2T)}.
\end{align*}
We can also further upper bound $\wt{\beta}_T$ as
\begin{align*}
\wt{\beta}_T &
= 2\xi\sqrt{\alpha\log(\kappa^2T)\left(\sum_{s=1}^T \wt{\sigma}_{t}^2(\bx_s)\right) + \log(1/\delta)}
+ \left(1 + \frac{1}{\sqrt{1-\varepsilon}}\right)\sqrt{\lambda}\fnorm\\
&\leq 
2\xi\sqrt{\alpha^2\log(\kappa^2T)\left(\sum_{s=1}^T \sigma_{t}^2(\bx_s)\right) + \log(1/\delta)}
+ \left(1 + \frac{1}{\sqrt{1-\varepsilon}}\right)\sqrt{\lambda}\fnorm\\
&\leq 2\xi\alpha\sqrt{\deff(\lambda,\wt{\bX}_T)\log(\kappa^2T) + \log(1/\delta)}
+ \left(1 + \frac{1}{\sqrt{1-\varepsilon}}\right)\sqrt{\lambda}\fnorm.
\end{align*}
Putting it together, we obtain
\begin{align*}
R_T
&\leq 2\left(2\xi\alpha\sqrt{\deff(\lambda,\wt{\bX}_T)\log(\kappa^2T) + \log(1/\delta)}\right)\sqrt{2\alpha T \deff(\lambda,\wt{\bX}_T)\log(\kappa^2T)}\\
&\quad + 2\left(\left(1 + \frac{1}{\sqrt{1-\varepsilon}}\right)\sqrt{\lambda}\fnorm\right)\sqrt{2\alpha T \deff(\lambda,\wt{\bX}_T)\log(\kappa^2T)}\\
&\leq 2\xi(2\alpha)^{3/2}\left(\deff(\lambda,\wt{\bX}_T)\log(\kappa^2T) + \log(1/\delta)\right)
 + 2\left(2\sqrt{\alpha}\sqrt{\lambda}\fnorm\right)\sqrt{2\alpha T \deff(\lambda,\wt{\bX}_T)\log(\kappa^2T)}\\
&\leq 2(2\alpha)^{3/2}\left(\sqrt{T}\xi\deff(\lambda,\wt{\bX}_T)\log(\kappa^2T) + \sqrt{T}\log(1/\delta) +\sqrt{T \lambda \fnorm^2 \deff(\lambda,\wt{\bX}_T)\log(\kappa^2T)}\right).
\end{align*}
 
\end{document}